\documentclass[letterpaper, 10 pt, journal, twoside]{ieeetran}  % Comment this line out if you need a4paper

\usepackage{amsmath}

\usepackage{amsthm}
\usepackage[font=footnotesize,labelfont=normalfont]{caption}
\usepackage{subcaption}
\usepackage{xcolor}
\usepackage{hyperref}
\hypersetup{
    colorlinks=true,
    linkcolor=blue,
    filecolor=magenta,      
    urlcolor=cyan,
    pdftitle={Overleaf Example},
    pdfpagemode=FullScreen,
    }
\usepackage{algpseudocode}
\usepackage[linesnumbered,ruled,vlined]{algorithm2e}
\usepackage{amsfonts}
\usepackage{svg}
\usepackage{cite}
\usepackage{siunitx}
\usepackage{float}

\DeclareMathOperator*{\argmin}{arg\,min}
\newcommand{\norm}[1]{\left\lVert#1\right\rVert}
\theoremstyle{definition}
\newtheorem{definition}{Definition}[]
\newtheorem{theorem}{Theorem}
\newtheorem{remark}{Remark}

\usepackage{bbm}
\usepackage[most]{tcolorbox}
\tcbset{
  highlightbox/.style={
    colback=blue!5!white,    % light blue background
    colframe=blue!10!white,  % slightly darker blue frame
    boxrule=0pt,             % no border
    arc=2mm,                 % rounded corners
    left=2mm, right=2mm, top=1mm, bottom=1mm,
    enhanced
  }
}

\title{\LARGE \bf
Neural Barriers: An Online Certifiable Learning-enhanced Adaptive High Order Safety Critical Control}

\author{Lishuo Pan$^{1}$, Mattia Catellani$^{2}$, An Cao$^{3}$, Lorenzo Sabattini$^{2}$, Nora Ayanian$^{1}$\\
$^{1}$Brown University, $^{2}$University of Modena and Reggio Emilia, $^{3}$ California Institute of Technology
\thanks{
This work was supported by NSF under Grant 2317145. \textit{(Corresponding author: Lishuo Pan.)}} %Use only for final RAL version
\thanks{$^{1}$Lishuo Pan, and Nora Ayanian are with the Department of Computer Science, Brown University, Providence, RI 02912 USA. 
        Email: {\tt\footnotesize \{lishuo\_pan, nora\_ayanian\}@brown.edu}.}\\ %
\thanks{$^{2}$Mattia Catellani and Lorenzo Sabattini are with the Department of Sciences and Methods for Engineering, University of Modena and Reggio Emilia, 41121 Modena, Italy.
        Email: {\tt\footnotesize \{mattia.catellani, lorenzo.sabattini\}@unimore.it}.}%
\thanks{$^{3}$An Cao is with the Division of Physics, Mathematics and Astronomy, California Institute of Technology, Pasadena, CA 91125 USA.
        Email: {\tt\footnotesize atcao@caltech.edu}.}%
}

\begin{document}

\maketitle
\thispagestyle{empty}
\pagestyle{empty}

%%%%%%%%%%%%%%%%%%%%%%%%%%%%%%%%%%%%%%%%%%%%%%%%%%%%%%%%%%%%%%%%%%%%%%%%%%%%%%%%
\begin{abstract}
Control barrier functions are an effective model-based tool to formally certify the safety of a system. However, transferring their theoretical guarantees to real-world robotics systems requires high model fidelity. For example, payloads or wind disturbances can cause significant model perturbations to an aerial vehicle, leading to safety compromises.
In this work, we propose a certifiable online learning-enhanced robust adaptive control barrier function, which adapts to disturbances using a Neural ODE and quantifies its adaptation uncertainty with conformal prediction. Our approach guarantees safety at all time under unknown time-varying model disturbances. It adopts a conservative strategy when the adaptation uncertainty is high; and efficiently adapts to reduce controller conservativeness as it receives more data. Our approach provides a provable safety guarantee with a probability bound under suitable Lipschitz smoothness assumptions on the underlying model and trajectory.  
These results demonstrate the potential of our method as a practical safety controller for robotics system operating under model perturbations.
\end{abstract}

%%%%%%%%%%%%%%%%%%%%%%%%%%%%%%%%%%%%%%%%%%%%%%%%%%%%%%%%%%%%%%%%%%%%%%%%%%%%%%%%
\section{Introduction}
%Safety is critical in many modern control applications. 
Modern robotics planning and control problems face increasing challenges, including efficient scalable planning for large robot teams~\cite{pan2024hierarchical, pan2025hierarchical}, vision- and language-grounded control~\cite{black2026pi0visionlanguageactionflowmodel}, and high dimensionality and nonlinearity in complex mechanical systems~\cite{luo2026sonic}. As of today, data-driven approaches are broadly adopted in motion planning and control stacks; despite their advantages in efficiency, empirical solution quality, and multimodal reasoning, guaranteeing safety for these approaches remains a challenge. It is now a necessity to certify the safety of learning-based approaches. 
%Modern robotics motion planning and control problems face increasing challenges, including high dimensionality and nonlinearity in complex systems. \textcolor{blue}{As the complexity increases, the interest grows in data-driven approaches such as learning-based control,} it is necessary to rigorously define safety in a controller design process. 
Control barrier functions (CBFs) have become a powerful model-based tool for safety-critical control, owing to their guarantees on set invariance without compromising performance~\cite{ames2014control, ames2019control}. A real-world robotics model, however, can differ significantly from a nominal model, due to modeling errors or external disturbances, such as payloads, varying terrain, or wind. Model inaccuracies can compromise the safety guarantees of a CBF controller, highlighting the need to develop a CBF that extends its theoretical guarantees to tolerate certain model perturbations, while preserving certifiable safety in real-world applications.
% a CBF controller can suffer from safety violations 
% a real-world robotics system 
%In practice, however, a CBF-certified controller may suffer from safety violations due to an inaccurate model, caused by modeling errors or time-varying external forces, such as payloads or wind disturbances.  This potential violation highlights the need for the development of CBFs that can adapt such complex residuals and ensure safety in real-world applications.

A large body of work has addressed this problem through the lenses of adaptive control and residual learning. Yet, no existing work simultaneously satisfies the goals of flexible residual parameterization, provable safety guarantees with a probability bound, online adaptation, and real-time computational performance. In this work, we propose a method that achieves all these goals. 

\begin{figure}[t]
    \centering
    \includegraphics[width=0.49\textwidth]{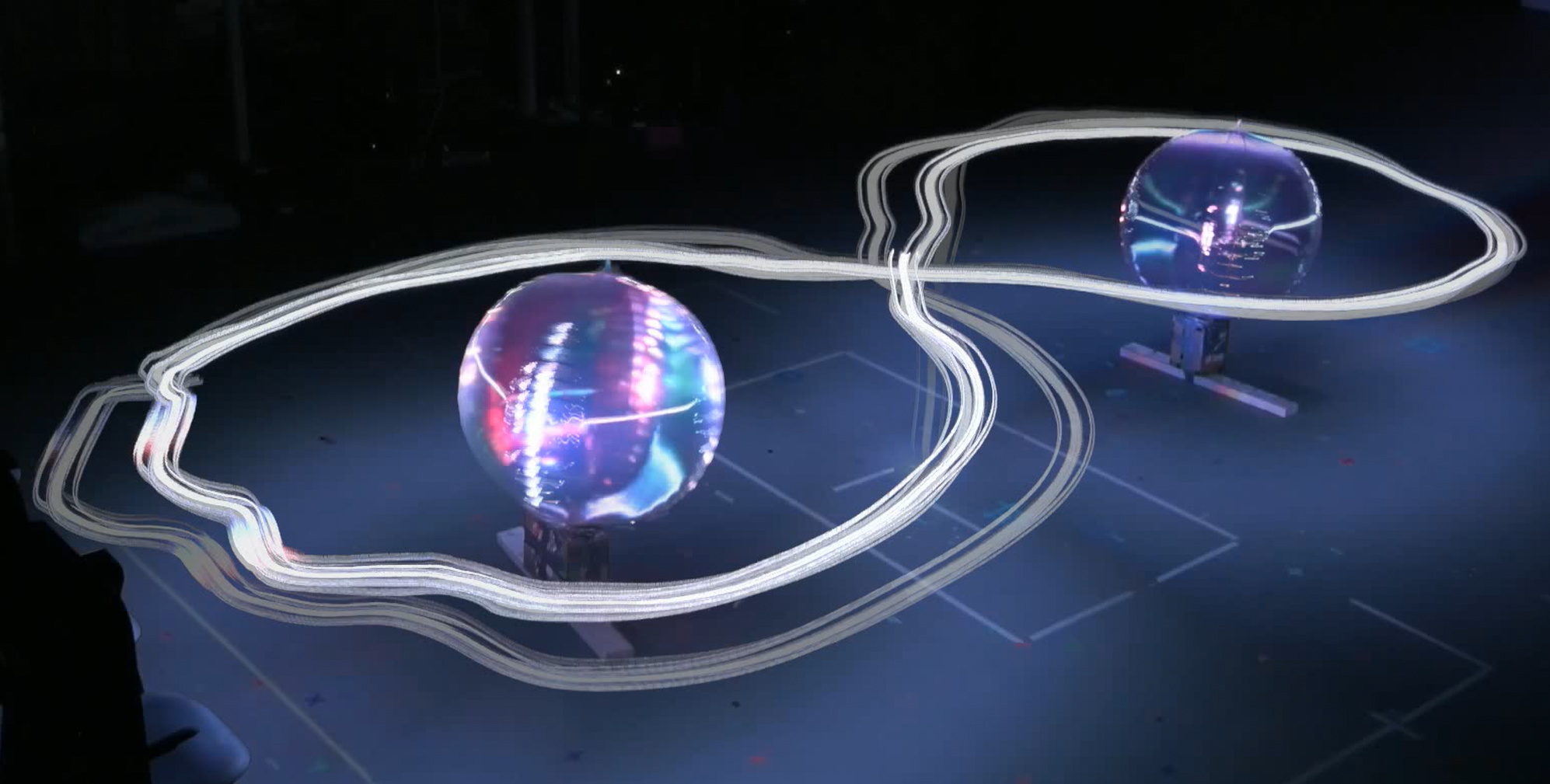}
    \caption{A 38$\unit{g}$ nano quadrotor tracks a figure-eight trajectory while keeping a safe distance from two obstacles, against an $7.6\unit{m/s}$ wind. Our Neural Barrier provides a probabilistic safety guarantee while reducing the controller conservativeness online.}
    \label{fig:demo}
\end{figure}

We propose a certifiable learning-enhanced high-order robust adaptive control barrier function (HO-RaCBF), namely Neural Barriers. 
Our approach can be divided into three modules running concurrently, namely: ($i$) a high-frequency HO-RaCBF optimization-based controller with learning-enhanced constraints; ($ii$) hybrid online residual learning; and ($iii$) online conformal prediction to evaluate the estimate uncertainty in residual learning. 
Our approach learns a hybrid model, which consists of a known nominal model and a neural network residual model, from state-control data using Knowledge-based Neural ODEs (KNODE)~\cite{jiahao2021knowledge}. Owing to the Neural ODEs formulation, our method can be trained on data sampled at irregular time intervals, making it well-suited for real-time systems. We combine weighted conformal prediction (WCP) with our Theorem~\ref{theo:CI_state_to_residual} to derive an online confidence interval for the underlying true dynamics model. At each control step, the HO-RaCBF optimization queries the hybrid model and the WCP bound to augment its constraints, thereby solving for optimal control at $100\unit{Hz}$, which guarantees safety with a specific confidence level. Our approach requires Lipschitz smoothness assumptions on the underlying dynamics model and trajectory. As a result, our online Neural Barriers provide a safety guarantee with probabilistic bounds and reduced conservativeness in the presence of complex unknown dynamics, without requiring any pre-training step.

%It guarantees safety at all times, i.e., even with the presence of domain shift and non-converged training, reduces the controller conservativeness as it sees more data, and requires no pre-training steps.
% high-frequency 
% We use the formulation of Robust control barrier function to provide safety guarantee with probability bound for our controller.
% Our controller utilize 
% We employ the weighted conformal prediction as a uncertainty 

The contributions of this work can be summarized as follows:
\begin{itemize}
    \item an online certifiable learning-enhanced CBF that adapts to time-varying model uncertainty;
    \item a provable safety guarantee with a probability bound under Lipschitz smoothness assumptions on model dynamics and states;
    \item a control schema that decouples adaptation from control loop, allowing adaptation on long-horizon data and high frequency control.
\end{itemize}

Our controller demonstrates superior adaptation and safety under unknown time-varying simulated model perturbations compared to Robust HOCBFs, and HO-RaCBFs controllers. We further validate our controller through physical experiments on a $38\unit{g}$ nano quadrotor in the presence of a large-area wind disturbance, shown in Fig.~\ref{fig:demo}.

\section{Related Work}
Dynamics and residual learning have gained popularity due to their success in adaptive control~\cite{jiahao2022learning, chee2022knode, o2022neural}. Neural-Fly~\cite{o2022neural} learns an adaptation term in its control law directly and achieves online adaptation to severe wind disturbances, but it requires a significant offline training phase of approximately $12$-minute flight data under diverse wind conditions. In contrast, our method learns the model online and improves performance via a model-based controller, and it does not require offline training, while supporting it when desired. KNODE-MPC~\cite{chee2022knode} improves MPC performance by learning a high-fidelity model, whereas our approach addresses adaptive control through the lens of the certifiable safety control problem. With a companion WCP module, we provide a provable safety guarantee with a probability bound under time-varying model perturbations. 

Recent studies on adaptive CBFs (aCBFs) are effective in improving the safety of a CBF-certified controller~\cite{pan2025online, taylor2020learning,choi2020reinforcement}. However, none of them provides a provable safety guarantee under model perturbations, arguably the most critical property of any safe controller. Among them, \cite{taylor2020learning,choi2020reinforcement} require an offline pre-training step, leading to significant estimation error during deployment due to domain shift. Online adaptive CBFs~\cite{pan2025online}, while addressing the domain shift problem with an efficient learning-enhanced approach, lack a provable safety guarantee and result in safety violations due to training delays. Our certifiable learning-enhanced safety controller provides a safety guarantee with a probabilistic bound and adapts a time-varying residual model, i.e., the gap between the true and nominal models, online. In addition, we employ a physics-informed approach, namely Neural ODEs, to estimate the residual model directly from observable state-control data, without relying on numerical differentiation for residual dynamics supervision.

A large body of robust adaptive CBFs (RaCBFs) aims to provide safety guarantees under model perturbations. A typical safety certificate is obtained by adding a robustifying term to the CBF constraints. Robust CBFs were first studied in~\cite{jankovic2018robust}, where the robustifying term is determined by 
%a bounded (assumed known) model uncertainty. 
an assumed-known bound on the model uncertainty.
This approach, however, leads to an overly conservative controller. RaCBFs~\cite{taylor2020adaptive, lopez2020robust, black2023safe} mitigate over-conservativeness by estimating parametric uncertainty. In many practical dynamic systems, the relative degree of CBFs is larger than one; therefore, a high-order CBF (HOCBF) formulation is required. The authors in~\cite{cohen2022high} extend the concurrent-learning RaCBFs in~\cite{isaly2021adaptive} to HO-RaCBFs. In these RaCBF works, a strong assumption is made that the residual is a linear model combining a known knowledge matrix and a learnable constant vector. Thus, this poses a challenge to feature engineering the knowledge matrix for deployment. The lack of a high-fidelity knowledge matrix breaks their safety guarantees. Careful tuning is also required for the concurrent-learning approach to work. Besides, concurrent-learning approaches compute the integral along samples in a history window in each control step, and quickly throttle the control frequency as the window grows. This prohibits them from capturing long-term patterns. In our approach, rather than assuming a specific form for the residual term, we use Neural ODEs as universal estimators of the underlying vector field, thereby capturing nonlinear residual dynamics. In addition, we decouple the adaptation and control loop, allowing long-horizon pattern recognition without throttling the control frequency.

Probabilistic CBF formulations also address uncertainty through stochastic models. Chance-constrained CBFs are formulated in~\cite{9561894} for autonomous vehicle merging by modeling motion uncertainty as Gaussian noise with known mean and covariance. SHIELD~\cite{11247065} learns a stochastic residual model from physical trajectories and integrates it into a stochastic discrete-time CBF for humanoid navigation.

\section{Preliminaries}
% \subsection{Neural Ordinary Differential Equations (NODEs)}
\subsection{High-order Control Barrier Functions (HOCBFs)}
Consider a nonlinear control affine system in the form
\begin{equation}\label{eq:nom_dynamics}
    \dot{\bf{x}} = f(\mathbf{x})+g(\mathbf{x})\mathbf{u},
\end{equation}
where $f: \mathbb{R}^p \rightarrow \mathbb{R}^p$ and $g: \mathbb{R}^p \rightarrow \mathbb{R}^{p\times q}$ are Lipschitz continuous functions, and $\mathbf{u} \in U \subset \mathbb{R}^q$ is the control input. 
A closed set $\mathcal{C}\in \mathbb{R}^{p}$ is \textit{forward invariant} for the closed-loop system $\dot{\mathbf{x}}$ if $\mathbf{x}(0)\in\mathcal{C}\Rightarrow \mathbf{x}(t)\in\mathcal{C}, \forall t$. In this paper, the \textit{forward invariance} property is used to formally define the notion of ``safety". It is assumed that any safe set can be expressed as a zero-superlevel set of a continuously differentiable function $h(\mathbf{x}): \mathbb{R}^p \rightarrow \mathbb{R}$, written as
\begin{align}
    \mathcal{C}=\{\mathbf{x}\in\mathbb{R}^{p} \mid h(\mathbf{x}) \geq 0\}. \label{eq:safe_set}
\end{align}

A control barrier function is a popular tool to produce controls that render the set~\eqref{eq:safe_set} forward invariant for~\eqref{eq:nom_dynamics}. Classical (first order) CBF formulations, however, are conditioned on the assumption that $h$ has relative degree one with respect to~\eqref{eq:nom_dynamics}. 
\begin{definition}[Relative degree~\cite{khalil2002nonlinear}]\label{def:relative_degree} A sufficiently smooth function $h$ is said to have \textit{relative degree} $r\in \mathbb{N}$ with respect to~\eqref{eq:nom_dynamics} on the set $\mathbb{R}^{p}$ if ($i$) $L_{g}L_{f}^{i-1}h(\mathbf{x}) = 0$, for all $1\leq i \leq r-1$, and ($ii$) $L_{g}L_{f}^{r-1}h(\mathbf{x}) \neq 0$ for all $\mathbf{x}\in \mathbb{R}^{p}$.
\end{definition}
For most mechanical systems and their safety requirements, the function $h$ has a relative degree greater than one. High-order control barrier functions  provide safety for such systems. 
\begin{definition}[HOCBFs~\cite{xiao2021high, tan2021high}] \label{def:HOCBF} Consider a system as in~\eqref{eq:nom_dynamics}. Let $\{\mathcal{C}_{i}\}_{i=1}^{r}$ be a collection of sets of the form $\mathcal{C}_{i}=\{\mathbf{x}\in\mathbb{R}^{p}\mid \psi_{i-1}\geq0\}$ satisfying
\begin{align}
    \psi_{r}(\mathbf{x}) &= \dot{\psi}_{r-1}(\mathbf{x}) + \alpha_{r}(\psi_{r-1}(\mathbf{x}))\\
    \psi_{0}(\mathbf{x}) &= h(\mathbf{x}),
\end{align}
where $\{\alpha_{i}\}_{i=1}^{r}$ is a set of differentiable extended class $\mathcal{K}$ functions. A function $h$ is said to be a HOCBF of order $r$ for~\eqref{eq:nom_dynamics} on an open set $\mathcal{D} \supset \cap_{i=1}^{r}\mathcal{C}_{i}$ if $h$ has relative degree $r$ on some nonempty $\mathcal{R} \subseteq \mathcal{D}$ and there exists a suitable choice of $\{\alpha_{i}\}_{i=1}^{r}$, such that for all $\mathbf{x}\in \mathcal{D}$
\begin{align}
    \label{eq:hocbf_const}
    \sup_{\mathbf{u} \in U} [\underbrace{L_f \psi_{r-1}(\mathbf{x}) + L_g\psi_{r-1}(\mathbf{x})\mathbf{u} + \alpha_r(\psi_{r-1}(\mathbf{x}))}_{\psi_{r}(\mathbf{x}, \mathbf{u})}] \geq 0. 
\end{align}
\end{definition}
\begin{theorem}[\!\!\cite{tan2021high}] Let $h$ be a HOCBF for~\eqref{eq:nom_dynamics} on $\mathcal{D}\subset \mathbb{R}^{p}$ as in Def.~\ref{def:HOCBF}. Any locally Lipschitz controller $\mathbf{u} \in U$ that satisfies $\psi_{r}(\mathbf{x}, \mathbf{u})\geq 0$, renders $\cap_{i=1}^{r}\mathcal{C}_{i}$ \textit{forward invariant} for the closed-loop system as in~\eqref{eq:nom_dynamics}.
\end{theorem}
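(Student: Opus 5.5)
The plan is a backward induction along the chain $\psi_{r}\to\psi_{r-1}\to\cdots\to\psi_0=h$, using a scalar comparison lemma at each stage. First I establish that the closed-loop trajectories exist and are unique. Since $f$ and $g$ are Lipschitz and $\mathbf{u}$ is locally Lipschitz, the vector field $f(\mathbf{x})+g(\mathbf{x})\mathbf{u}(\mathbf{x})$ is locally Lipschitz. Hence for every $\mathbf{x}(0)\in\cap_{i=1}^{r}\mathcal{C}_{i}$ there is a unique solution $\mathbf{x}(t)$ on a maximal interval $[0,\tau_{\max})$. Along this solution, each $\psi_{i-1}(t):=\psi_{i-1}(\mathbf{x}(t))$ is continuously differentiable. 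This is because $h$ has relative degree $r$ and the $\alpha_i$ are differentiable, so $\psi_0,\dots,\psi_{r-1}$ are $C^1$ in $\mathbf{x}$.

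The key step is the following scalar comparison lemma. Let $y:[0,\tau)\to\mathbb{R}$ be absolutely continuous with $\dot y(t)\ge -\alpha(y(t))$ almost everywhere, where $\alpha$ is an extended class $\mathcal{K}$ function, and suppose $y(0)\ge 0$. Then $y(t)\ge 0$ on $[0,\tau)$. I would prove it by contradiction. Suppose $y(t_1)<0$ for some $t_1$, and let $t_0=\sup\{t<t_1 : y(t)\ge 0\}$. On $(t_0,t_1]$ we have $y<0$, so $-\alpha(y)>0$ and therefore $\dot y>0$ there. Integrating gives $y(t_1)>y(t_0)=0$, which is a contradiction. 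The same argument can be phrased via the comparison lemma in~\cite{khalil2002nonlinear}, whose comparison ODE $\dot z=-\alpha(z)$ with $z(0)\ge0$ stays nonnegative because $z\equiv 0$ is an equilibrium.

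Next I run the induction. By definition, $\psi_{i}=\dot\psi_{i-1}+\alpha_{i}(\psi_{i-1})$. The controller satisfies $\psi_r(\mathbf{x},\mathbf{u})\ge0$ at every point, and along the trajectory $\psi_r=\dot\psi_{r-1}+\alpha_r(\psi_{r-1})$. Hence $\dot\psi_{r-1}\ge-\alpha_r(\psi_{r-1})$. Since $\mathbf{x}(0)\in\mathcal{C}_r$, we have $\psi_{r-1}(0)\ge0$, and the lemma gives $\psi_{r-1}(t)\ge0$ for all $t$. This last fact means $\dot\psi_{r-2}\ge-\alpha_{r-1}(\psi_{r-2})$, and $\psi_{r-2}(0)\ge0$ because $\mathbf{x}(0)\in\mathcal{C}_{r-1}$. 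Applying the lemma again gives $\psi_{r-2}\ge0$. Repeating down to $\psi_0=h$ shows $\mathbf{x}(t)\in\cap_{i=1}^{r}\mathcal{C}_i$ for all $t\in[0,\tau_{\max})$.

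I expect the main obstacle to be the technical side conditions rather than the induction. One issue is that the trajectory must remain in $\mathcal{D}$, where the HOCBF condition and the relative-degree smoothness hold. This follows because the invariant set $\cap_i\mathcal{C}_i$ is contained in the open set $\mathcal{D}$, so the solution never leaves it. The other issue is forward completeness, i.e.\ $\tau_{\max}=\infty$. I would handle it by the standard convention of \cite{ames2019control}, taking forward invariance on the maximal interval of existence. If completeness is required explicitly, I would either assume compactness of $\cap_i\mathcal{C}_i$, so that solutions cannot escape in finite time, or assume global Lipschitzness of the closed loop.
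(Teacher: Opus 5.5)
Your proposal is correct and is essentially the standard argument. The paper does not reprove this result but defers to \cite{tan2021high} (see also \cite{xiao2021high}), where invariance is obtained by the same backward induction $\psi_{r-1}\to\cdots\to\psi_0$ with a scalar comparison lemma at each level. Your first-crossing-time proof of that lemma is a clean, self-contained substitute that does not even require $\alpha$ to be locally Lipschitz, since it uses only $\alpha(y)<0$ for $y<0$.

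The one step to tighten is the claim that the solution stays in $\mathcal{D}$, because as written it invokes the conclusion. Instead, run the induction only up to the first exit time from $\mathcal{D}$. Then use that $\cap_{i=1}^{r}\mathcal{C}_i$ is closed and $\mathcal{D}$ is open to show this exit time cannot occur before $\tau_{\max}$.
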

\subsection{Knowledge-based Neural ODEs (KNODE)}
KNODE~\cite{jiahao2021knowledge} is a data-driven approach that approximates a dynamical system relying on state-control pairs, i.e., $\mathbf{z}:=[\mathbf{x};\mathbf{u}]$. Its efficiency comes from combining a nominal model with a neural network, learning merely the residual dynamics. The hybrid model is given by
\begin{align}\label{eq:hybrid_dynamics}
    \dot{\mathbf{x}} = \tilde{f}(\mathbf{z}) + \hat{f}_{\boldsymbol{\theta}}(\mathbf{z}) = f_h(\mathbf{z}, \hat{f}_{\boldsymbol{\theta}}(\mathbf{z})),
\end{align}
where $\tilde{f}$ denotes a nominal model and $\hat{f}_{\boldsymbol{\theta}}$ denotes the residual model estimated by a neural network parameterized with $\boldsymbol{\theta}$. The neural ODE-based approach learns the model of a dynamical system. Instead of learning in the state space, it learns its dynamical model directly and predicts the state using ODE solvers, thus allowing observations sampled with different time intervals (typical in a physical real-time system). An efficient learning approach preserves the physics of a dynamic model, making it well-suited for our online HO-aCBF controller. 

\subsection{Weighted Conformal Prediction (WCP)}
\label{sec:preliminary_WCP}
Conformal prediction (CP) is an uncertainty quantifier that produces provably valid estimation intervals for any given regression model. The validity of such intervals requires training and calibration data sets to be exchangeable~\cite{vovk2005algorithmic}. This assumption, however, is too strong for the time series we study, due to two factors: ($i$) data correlation over time; ($ii$) domain shift resulting from time-varying residual dynamics. To alleviate such limitations, we employ a state-of-the-art variant of CP, namely weighted conformal prediction (WCP)~\cite{barber2023conformal}. 

Suppose we have a data set of $n$ samples $\{(\mathbf{x}_{i}, \mathbf{d}_{i})\}_{i=1}^{n}$, where $\mathbf{x}_{i}\in\mathbb{R}^{p}$ is the state, 
%$\mathbf{x}_{i}$ and control $\mathbf{u}_{i}$, 
$\mathbf{d}_{i} \in \mathbb{R}^{p}$ is the true residual dynamics at state $\mathbf{x}_{i}$, i.e., $d(\mathbf{x}_{i})$, where $d(\mathbf{x})$ denotes the underlying residual dynamics. 
%, for $k=1,\ldots,p$.  
The objective of CP is that given a test point $\mathbf{x}_{n+1}$, its label $\mathbf{d}_{n+1}$ lies within a set $C(\mathbf{x}_{n+1})$ with a pre-defined probability. We split the dataset into a training set $\{(\mathbf{x}_{i}, \mathbf{d}_{i}):i\in \mathcal{I}_{1}\}$ and a calibration set $\{(\mathbf{x}_{i}, \mathbf{d}_{i}):i\in \mathcal{I}_{2}\}$. 
Given our residual model $\hat{d}(\mathbf{x})$ fit to the training set $\mathcal{I}_{1}$, we compute a set of nonconformity scores $S(\mathbf{x}, \mathbf{d})$ in the calibration set $\mathcal{I}_{2}$, with element
\begin{align}
    s_{i}(\mathbf{x}, \mathbf{d}) =  \norm{\hat{d}(\mathbf{x}_{i}) - \mathbf{d}_{i}}_{\infty} , \forall i\in \mathcal{I}_{2}\label{eq:nonconformity_scores},
\end{align}
where $\norm{\cdot}_{\infty}$ denotes the infinity norm. For a given confidence level ($1\!-\!\alpha$), we then compute the weighted quantile of the empirical distribution of the set $S(\mathbf{x}, \mathbf{d})$,
\begin{align}
    &Q_{1-\alpha}(S(\mathbf{x}, \mathbf{d})) = Q_{1-\alpha}\left(\sum_{i=1}^n \tilde{w}_i \delta_{s_{i}}+\tilde{w}_{n+1} \delta_{+\infty}\right)\label{eq:quantile}\\
    &\tilde{w}_{i} = \frac{w_{i}}{w_{1}+\ldots+w_{n}+1},~~ w_{i} = \rho^{n+1-i}\label{eq:CP_weights},
\end{align}
where $Q_{\tau}(\cdot)$ denotes the $\tau$ quantile of its argument, $\delta_{a}$ denotes the point mass at $a$, and $\rho\in(0,1)$ is the decay rate. For more details on the design of the weights $w_{i}$ for calibration data, the reader is referred to~\cite{barber2023conformal}. The confidence interval of $d(\mathbf{x}_{n+1})$ is given by
\begin{align}
    C(\mathbf{x}_{n+1}) &=\{\mathbf{d}:s(\mathbf{x}_{n+1}, \mathbf{d}) \leq Q_{1-\alpha}(S(\mathbf{x}, \mathbf{d}))\} \nonumber\\
    &= \hat{d}(\mathbf{x})\pm Q_{1-\alpha}(S(\mathbf{x}, \mathbf{d}))\mathbf{1}_{p\times 1}\label{eq:CP_interval}.
\end{align}
The interval satisfies
\begin{align}
    \mathbb{P}\left\{ \mathbf{d}_{n+1}\in C(\mathbf{x}_{n+1})\right\} \geq 1-\alpha.
\end{align}

\section{Problem Formulation}
Consider a robot that follows a reference trajectory with a desired control $\mathbf{u}_{\mathrm{des}}$ and a CBF certifies the desired control to avoid colliding with obstacles. However, the safety guarantee of CBF can be compromised under unknown model perturbations. 
We consider the adaptive CBF formulation in~\cite{taylor2020adaptive}, where the true model can be written as,
\begin{align}\label{eq:dynamics}
    \dot{\mathbf{x}} = f(\mathbf{x}) + g(\mathbf{x})\mathbf{u} + d(\mathbf{x}),
\end{align}
where $f(\mathbf{x}) + g(\mathbf{x})\mathbf{u}$ forms a nominal model and $d(\mathbf{x})$ represents the unknown residuals and is allowed to change over time. 
In this work, we present a provable safety guarantee with the probabilistic bound for a CBF controller under model perturbations, by estimating the true residual model $d(\mathbf{x})$ and evaluating its confidence interval using online state-control data. 

Our certifiable learning-enhanced high-order RaCBF controller uses KNODE to predict the unknown nonlinear, time-varying residual dynamics $d$ using a neural network $\hat{d}_{\boldsymbol\theta}$, parameterized with $\boldsymbol\theta$, while concurrently evaluating its confidence interval via WCP. In each control step, our optimization-based RaCBF controller queries $\hat{d}_{\boldsymbol\theta}$ and the confidence interval to solve the optimal safety-critical control. Our data-driven approach requires online state-control data, i.e., $\mathbf{z}:=[\mathbf{x};\mathbf{u}]$. Furthermore, we generalize the proposed certifiable learning-enhanced higher-order RaCBFs to handle a broader category of systems. 

In Sec.~\ref{sec:HO-RaCBFs}, we introduce our formulation of HO-RaCBFs with probabilistic safety guarantee. In Sec.~\ref{sec:KNODE_online}, we introduce our hybrid approach to online residual learning using KNODE. In Sec.~\ref{sec:probabilistic_safety_guarantee}, we introduce our main theorem result, which derives the model confidence given the observable state confidence interval. In Sec.~\ref{sec:online_WCP}, we evaluate our model prediction uncertainty online using WCP. In Sec.~\ref{sec:simulation}, we present the simulation results of applying our Neural Barriers on a double integrator for collision avoidance tasks. Finally, in Sec.~\ref{sec:physical_experiments}, we validate our Neural Barrier on a nano-quadrotor platform in an indoor environment with real-world wind turbulence.

\section{Neural Barriers}
In principle, for an adaptive CBF to provide a safety guarantees under model perturbations, it must predict accurate residual dynamics at all times, 
%that is, any prediction errors might cause safety violations.
as prediction errors may lead to safety violations.
We formulate a RaCBF in this section and define the safety guarantee with the probabilistic bound. We introduce our online learning scheme, which estimates time-varying, nonlinear residuals often seen in real-world applications. In addition, our physics-informed learning approach allows training data to be sampled at irregular time intervals (typical for a real-time system) and further improves learning efficiency. 

\begin{figure}[t]
    \centering
    {\includegraphics[width=0.49\textwidth]{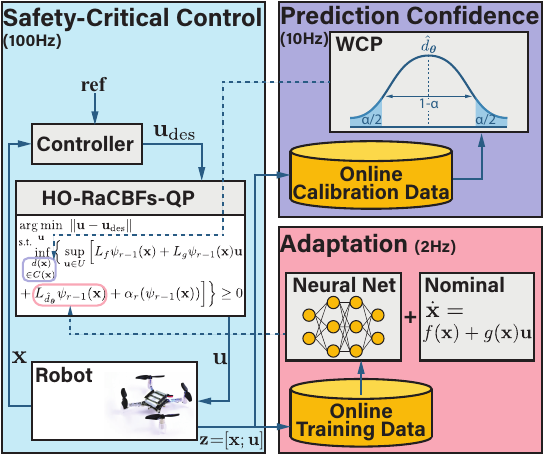}}
    \caption{System overview of our Neural Barrier controller. The safety control solves the NODE-HO-RaCBF QP (at $100\unit{Hz}$), using the most recently trained residual model (updated at $2\unit{Hz}$) and conformal prediction bounds (updated at $10\unit{Hz}$), to certify the desired control.}
    \label{fig:framework}
    % \vspace{-1em}
\end{figure}

\subsection{HO-RaCBFs with Probabilistic Safety Guarantee}
\label{sec:HO-RaCBFs}
%Control barrier function is a powerful tool to guarantee safety, which requires a correct model. Uncertainty, however, poses a challenge to identifying the accurate dynamics. 
Beyond adapting time-varying residual dynamics, we provide a safety guarantee with a pre-defined confidence level for the learning-enhanced CBF controller at all times, regardless of adaptation error. %To this purpose, we follow the formulation of RaCBFs in~\cite{black2023safe} and extend it to second-order CBFs. %with $m=1$ and extend it to the case $m=2$. 
Suppose the residual prediction error is bounded by a scalar $D$ in the infinite norm with probability at least $(1-\alpha)$, i.e.,  
%error between the estimation and  bound on the true residuals with a confidence level $1-\alpha$
\begin{align}
    \label{eq:conf_interval}
    \mathbb{P}\Big\{C(\mathbf{x}, \boldsymbol{\theta}):= \norm{\hat{d}_{\boldsymbol{\theta}}(\mathbf{x}) -  d(\mathbf{x})}_{\infty} \leq D \Big\} \geq 1-\alpha.
\end{align}
Equivalently, the true residual satisfies the bounds
\begin{align}
    \mathbb{P}\Big\{\hat{d}_{\boldsymbol{\theta}}(\mathbf{x}) - D\mathbf{1}_{p\times 1} \preceq d(\mathbf{x})\preceq \hat{d}_{\boldsymbol{\theta}}(\mathbf{x}) + D\mathbf{1}_{p\times 1} \Big\} \geq 1-\alpha,
\end{align}
% The robustness is achieved by finding the 
%Specifically, given the bounds on the true residual,  
% on the unknown disturbance $\| d(x) \|_{\infty} \leq D < \infty$, we know that
% \begin{equation}
%     -\overline{b}_d \leq L_db(\mathbf{x}) \leq \overline{b}_d
% \end{equation}
% where $\overline{b}_d = D |\frac{\partial b(\mathbf{x})}{\partial \mathbf{x}} | \mathbf{1}_{p\times 1}$. Hence, the robust CBF condition can be written as
% \begin{equation}\label{eq:r_cbf}
%     \sup_{\mathbf{u} \in U} \big[ L_fb(\mathbf{x}) + L_gb(\mathbf{x})\mathbf{u} - \overline{b}_d\big] \geq -\alpha(b(\mathbf{x}))
% \end{equation}
% and model the true dynamics as in~\eqref{eq:linear_hybrid_dynamics}. 
%where the disturbance is modeled independently from the control input $\mathbf{u}$ as $d(\mathbf{x}(t))$. 
where $\mathbf{1}_{p\times 1}$ denotes the $p$-dimensional vector of ones. For simplicity of discussion, we omit the explicit mention of the confidence level. All safety guarantees presented below hold with at least $(1-\alpha)$ probability.  
\begin{remark}
By using the infinity norm in the definition of \eqref{eq:conf_interval}, we make a conservative choice: we bound the residuals in all dimensions using the same threshold $D$. This results in confidence regions that may be overly conservative in some directions. Alternatively, one can define separate confidence intervals for each dimension, yielding individual bounds $D_i$, for $i=1,...,p$, that can be less conservative overall. In this case, however, the joint confidence level becomes the product of the individual probabilities--since the events are assumed independent. In contrast, the infinity norm approach guarantees a uniform confidence level across all dimensions by inflating some bounds. This effectively compensates for the reduction in probability due to assuming the worst-case joint event. 
%We keep this choice for simplicity of notation.
We adopt this choice for simplicity of formulation.
\end{remark}

We assume the true dynamics as in~\eqref{eq:dynamics}, where we consider a time-varying residual term $d(\mathbf{x})$, which depends on the robot state and is allowed to change over time. 
We formulate our hybrid model, combining the known nominal dynamics with a neural ODE approximated residual dynamics, as follows,
\begin{align}\label{eq:linear_hybrid_dynamics}
    f_h(\mathbf{z}, \hat{d}_{\boldsymbol{\theta}}(\mathbf{x})) = f(\mathbf{x}) + g(\mathbf{x})\mathbf{u} + \hat{d}_{\boldsymbol{\theta}}(\mathbf{x}),
\end{align}
where $\hat{d}_{\boldsymbol{\theta}}$ denotes the residual dynamics estimated by a neural network parameterized with $\boldsymbol{\theta}$.

Adaptive CBFs with residual modeling error are studied in RaCBFs~\cite{black2023safe}. The safety of such a controller is guaranteed by considering the worst-case scenario, specifically, the CBF constraint is satisfied even with the infimum of the left-hand side in (\ref{eq:hocbf_const}), with respect to the residual model $d(\mathbf{x})$. We define a safety-critical controller by solving the following Quadratic Programming (QP) problem:
\begin{subequations}\label{eq:RaCBF_optimization}
\begin{align}
    \argmin_{\mathbf{u}} ~ & \norm{ \mathbf{u} - \mathbf{u}_{\mathrm{des}} } \label{QPcost}\\
    \text{s.t.} 
    ~ & \inf_{d(\mathbf{x})\in C(\mathbf{x}, \boldsymbol{\theta})} \Big\{ \sup_{\mathbf{u} \in U} \Big[L_{f} \psi_{r-1}(\mathbf{x}) + L_g\psi_{r-1}(\mathbf{x})\mathbf{u}  \nonumber \\ 
    & \quad + L_{d} \psi_{r-1}(\mathbf{x}) + \alpha_{r}(\psi_{r-1}(\mathbf{x})) \Big] \Big\} \geq 0,\ \label{eq:racbf_constraints}
\end{align}
\end{subequations}
where,
\begin{align}
    \psi_{r} & = \dot{\psi}_{r-1} + \alpha_{r}(\psi_{r-1})\\
    & = \left(\frac{\partial \psi_{r-1}}{\partial \mathbf{x}}\right)^{\top}(f+g\mathbf{u}+\hat{d}_{\boldsymbol{\theta}}) + \alpha_{r}(\psi_{r-1})\\
    \psi_{0} & = h(\mathbf{x}),
\end{align}
where $\mathbf{u}_{\mathrm{des}}$ is the desired control. 
To compute $\psi_{r}$, the algorithm requires taking the $(r\!-\!1)$-th derivative of the residual dynamics $\hat{d}_{\boldsymbol{\theta}}$ estimated by the neural network w.r.t. the state, i.e. $\nabla_{\mathbf{x}}^{r-1} \hat{d}_{\boldsymbol{\theta}}$, which can be computed by automatic differentiation from machine learning frameworks or numerical methods. 
Note that, we find the infimum of left-hand side in~(\ref{eq:racbf_constraints}) w.r.t. $d(\mathbf{x})$ in the $(1-\alpha)$ confidence interval $C(\mathbf{x})$. As the true residual $d(\mathbf{x})$ lies within $C(\mathbf{x})$ with probability at least $(1-\alpha)$, the resulting safety-critical control provides safety guarantees with this probability bound.

% \begin{subequations}
% \begin{IEEEeqnarray}{rCl'rCl}
% \argmin_{\mathbf{u}} 
% &~& \norm{ \mathbf{u} - \mathbf{u}_{\mathrm{des}} } \label{QPcost}\\
% \text{s.t.} 
% &~& \inf_{d(\mathbf{x})\in C(\mathbf{x})} \Big\{ \sup_{\mathbf{u} \in U} \Big[L_{f+d}^{2} h(\mathbf{x}) + [L_gL_{f+d}h(\mathbf{x})]\mathbf{u} \nonumber \\ 
% &~& + (\gamma_{1}+\gamma_{2})L_{f+d}(h(\mathbf{x})) + \gamma_{1}\gamma_{2}h(\mathbf{x}) \Big] \Big\} \geq 0,\ \label{eq:robust_cbf_constraints}
% \label{QPconst:init_state}
% \end{IEEEeqnarray}
% \end{subequations}
% \begin{subequations}
% \begin{IEEEeqnarray}{rCl'rCl}
% \argmin_{\mathbf{u}} 
% &~& \norm{ \mathbf{u} - \mathbf{u}_{\mathrm{des}} } \label{QPcost}\\
% \text{s.t.} 
% &~& \inf_{d(\mathbf{x})\in C(\mathbf{x})} \Big\{ \sup_{\mathbf{u} \in U} \Big[L_{f+d}^{2} b(\mathbf{x}) + [L_gL_{f+d}b(\mathbf{x})]\mathbf{u} \nonumber \\ 
% &~& + (\gamma_{1}+\gamma_{2})L_{f+d}(b(\mathbf{x})) + \gamma_{1}\gamma_{2}b(\mathbf{x}) \Big] \Big\} \geq 0,\ \label{eq:robust_cbf_constraints}
% \label{QPconst:init_state}
% \end{IEEEeqnarray}
% \end{subequations}
%where $\mathbf{u}_{\mathrm{des}}$ is the desired control. %Safety is guaranteed by considering the worst-case scenario, specifically, the CBF constraint is satisfied with the infimum of the left-hand side w.r.t. residual model $d(\mathbf{x})$. 

\subsection{KNODE Online Learning}
\label{sec:KNODE_online}
%Domain shift of the underlying 
An unknown nonstationary residual, as prior models quickly become obsolete, requires continuous online adaptation to identify the current true dynamics. %, as prior models quickly become obsolete, requiring continuous learning to identify the current true dynamics on the fly. 
%Nonstationary uncertainty implies the domain shift of the underlying uncertainty distribution, thus posing challenges to the online system identification. 
%To identify unknown residuals online, we design a queue-based data collector. To minimize the dispense between the ground truth and approximated residual, we require convergence of training and update the model in seconds.
%The training of the network is . 
%Essentially, our neural network model functions as an online identifier for the latest nonlinear environmental uncertainties. Frequent updates at the scale of a few seconds reduce residual domain shift in a continuous learning framework. 
%Describe the challenge in terms of domain shift, and the assumption we need . essentially our method is a non-linear parameter to represent the residual. generalization is not a problem, we are basically trying to learn the recent residual exact. 
We employ a queue $\mathcal{Q}_{\mathrm{train}}$ to store state-control pairs in a first-in, first-out (FIFO) manner, ensuring that the hybrid model $f_{h}$ is trained online with the most recent data. 
%The hybrid dynamics $f_{h}$ is trained online. 
%Once a trainer is available, we feed it with the most recent observational data $\mathcal{Q}_{\mathrm{train}}$. 
The neural ODE learns the dynamical system directly. %Consider an objective function $\mathcal{L}$ that penalizes the error between predictions and observations, where the prediction obeys a learning-based ODE, defined as follows, 
Consider a cost function $\mathcal{L}$ defined as
%Consider an objective function $\mathcal{L}$ defined as follows,
% \mathcal{L}(\theta, \psi)=\frac{1}{m-1} \sum_{i=1}^{m-1} \int_{t_i}^{t_{i+1}} \delta\left(t_s-\tau\right)\|\hat{\mathbf{x}}(\tau)-\mathbf{x}(\tau)\|^2 d \tau+\mathcal{R}(\theta, \psi)
\begin{align}\label{eq:loss}
    \mathcal{L}(\boldsymbol{\theta})=&\frac{1}{m-1} \sum_{i=1}^{m-1} \int_{t_i}^{t_{i+1}} \delta\left(t_s-\tau\right)\|\hat{\mathbf{x}}_{\tau}-\mathbf{x}_{\tau}\|^2 d \tau + \lambda \norm{\boldsymbol{\theta}}_{2}^{2}
    % &+\mathcal{R}(\theta),\\
    % \mathcal{R}(\theta) =& \lambda \norm{\boldsymbol{\theta}}_{2}^{2},
    % \hat{\mathbf{x}}(\tau) = \mathbf{x}(t_{i}) + \int_{t_{i}}^{\tau} f_h(\mathbf{x}(w), \mathbf{u}(w), \hat{d}_{\boldsymbol{\theta}}(\mathbf{x}(w)))dw,
\end{align}
that penalizes the error between observations and predictions generated according to a learning-based ODE. Here,
$m$ is the number of samples in $\mathcal{Q}_{\mathrm{train}}$, $\delta(\cdot)$ is the Dirac delta function, $t_{s}$ is any sampling time in $\mathcal{Q}_{\mathrm{train}}$, $\mathbf{x}_{\tau}$ is a shorthand for $\mathbf{x}(\tau)$.  For the learned model to generalize, we use the L2 regularization with a weight $\lambda$~\cite{zhang2016understanding}. The prediction is computed as follows, 
\begin{align}\label{eq:prediction}
    \hat{\mathbf{x}}_{\tau} = \mathbf{x}_{t_{i}} + \int_{t_{i}}^{\tau} f_h(\mathbf{z}_{w}, \hat{d}_{\boldsymbol{\theta}}(\mathbf{x}_{w}))dw.
\end{align}
The parameters $\boldsymbol{\theta}$ are obtained by solving the optimization:
\begin{subequations}\label{eq:NODE_optimization}
\begin{align}
    \argmin_{\boldsymbol{\theta}} ~ & \mathcal{L}(\boldsymbol{\theta}) \label{eq:NODE_loss}\\
    \text{s.t.} 
    ~ &  \dot{\mathbf{x}} = f_{h}(\mathbf{z}, \hat{d}_{\boldsymbol{\theta}}(\mathbf{x})) .\label{eq:NODE_const}
\end{align}
\end{subequations}
In practice, a gradient-based optimization tool, such as PyTorch, can be employed to solve the above optimization. Note that the online learning is decoupled from the controller. This design allows learning from long-horizon data without throttling the control frequency.  
%\textcolor{blue}{training process, the loss function figure. The figure of effect of the learning of adaptive cbf.}
\begin{algorithm}[t]
\caption{Online Dynamic Learning}
\label{alg:online_learning_model}
Initialize the current time, total duration, training data as $t_{i}$, $t_{T}$, $\mathcal{Q}_{\mathrm{train}}$, and the hybrid model in~\eqref{eq:linear_hybrid_dynamics}\\
\While{$t_{i} < t_{T}$} {
    \If{$\mathcal{Q}_{\mathrm{train}} \neq \emptyset$} {
        Train the hybrid model by optimizing~\eqref{eq:NODE_optimization} using $\mathcal{Q}_{\mathrm{train}}$\\
        Save the new model
    }
    $t_{i}\leftarrow$ current time
}
% \vspace{-1em}
\end{algorithm}

% \vspace{-1em}
Our algorithm can also be used in offline mode. In offline mode, we collect state-control data by operating the vehicle using the baseline HOCBF controller for a certain duration in the presence of unknown dynamics. We pre-train the neural network to approximate the residual dynamics using all the data collected and deploy the Neural Barrier controller with the learned $\hat{d}_{\boldsymbol{\theta}}(\mathbf{x})$.

\section{Probabilistic Safety Guarantee}
\label{sec:conformal_prediction}
A neural network is a universal approximator to any unknown residual dynamics. 
In an online learning setting, however, its prediction accuracy is limited by insufficient online data, the number of training epochs within the computational budget, and domain shift. To ensure safety at all times, such prediction errors cannot be tolerated. Instead, we evaluate a confidence interval for the true model online as a response to the training delay and domain shift. To be concrete, we employ weighted conformal prediction, using an online queue $\mathcal{Q}_{\mathrm{calib}}$ as the calibration set, to produce an up-to-date confidence interval for the true residual $d$. Therefore, our controller accounts for residual estimation error in real-time and guarantees safety with a pre-defined confidence level. 

\subsection{Probabilistic Safety Guarantee}
\label{sec:probabilistic_safety_guarantee}
In Sec.~\ref{sec:preliminary_WCP}, we introduced the definition of conformal prediction and its weighted version to alleviate the strong assumptions. Computation of nonconformity scores $s_{i}$ in~\eqref{eq:nonconformity_scores} requires access to the ground truth dynamics $d(\mathbf{x})$, which are unavailable. In this section, we propose ($i$) one approximate interval based on finite difference; and ($ii$) one true interval based on the state-control trajectory data, under the assumption of Lipschitz smoothness of the dynamics and trajectories. The second approach does not rely on approximation; however, it produces wider bounds and leads to a more conservative controller.

Consider a data set of $n$ samples $\mathcal{D}:=\{(\mathbf{z}_{i}, \mathbf{y}_{i})\}_{i=1}^{n}$, where $\mathbf{z}_{i}=[\mathbf{x}_{i};\mathbf{u}_{i}]$ is the state-control pair at time $t_{i}$, $\mathbf{y}_{i}$ is the state at time $t_{i+1}$. Since $d(\mathbf{x})$ is inaccessible, we approximate the nonconformity score in~\eqref{eq:nonconformity_scores} by using a finite difference estimator $\tilde{d}(\mathbf{z},\mathbf{y})$ as the label of the true residual %, and derive a prediction interval as follows
\begin{align}
    &\tilde{\mathbf{d}}_{i} = \tilde{d}(\mathbf{z}_{i},\mathbf{y}_{i}) = \frac{\mathbf{y}_{i} - \mathbf{x}_{i}}{t_{i+1} - t_{i}} - (f(\mathbf{x}_{i})+g(\mathbf{x}_{i})\mathbf{u}_{i}),
    \label{eq:finite_difference_dx}
    % &\hat{s}_{i} = \mid \hat{d}^{k}(\mathbf{x}_{i}) - \tilde{d}^{k}(\mathbf{x}_{i}) \mid, \forall i\in \mathcal{I}_{2}\\
    % &\hat{C}(\mathbf{x}_{i}) = \hat{d}(\mathbf{x})\pm Q_{1-\alpha}(\hat{S}),\\
    % &\mathbb{P}\left\{ \mathbf{d}_{n+1}\in \hat{C}(\mathbf{x}_{n+1})\right\} \geq 1-\alpha,\\
    % &\mathbb{P}\left\{\norm{\hat{d}(\mathbf{x}_{n+1}) - d(\mathbf{x}_{n+1})}_{\infty} \leq Q_{1-\alpha}(S(\mathbf{x},\tilde{\mathbf{d}}))\right\}\geq 1-\alpha
\end{align}
and derive a residual prediction interval as follows,
\begin{align}
    % &\tilde{\mathbf{d}}_{i} = \tilde{d}(\mathbf{z}_{i},\mathbf{y}_{i}) = \frac{\mathbf{y}_{i} - \mathbf{x}_{i}}{t_{i+1} - t_{i}} - (f(\mathbf{x}_{i})+g(\mathbf{x}_{i})\mathbf{u}_{i}),\\
    % &\hat{s}_{i} = \mid \hat{d}^{k}(\mathbf{x}_{i}) - \tilde{d}^{k}(\mathbf{x}_{i}) \mid, \forall i\in \mathcal{I}_{2}\\
    % &\hat{C}(\mathbf{x}_{i}) = \hat{d}(\mathbf{x})\pm Q_{1-\alpha}(\hat{S}),\\
    % &\mathbb{P}\left\{ \mathbf{d}_{n+1}\in \hat{C}(\mathbf{x}_{n+1})\right\} \geq 1-\alpha,\\
    &\mathbb{P}\left\{\norm{\hat{d}(\mathbf{x}_{n+1}) - d(\mathbf{x}_{n+1})}_{\infty} \leq Q_{1-\alpha}(S(\mathbf{x},\tilde{\mathbf{d}}))\right\}\geq 1-\alpha,
    \label{eq:CP-dynamics}
\end{align}
where $S(\mathbf{x}, \tilde{\mathbf{d}})$ is computed similarly to~\eqref{eq:nonconformity_scores} with $\mathbf{d}_{i}$ replaced by $\tilde{\mathbf{d}}_{i}$.
% where $\hat{C}(\mathbf{x}_{n+1})$ is computed similar to~\eqref{eq:nonconformity_scores}-\eqref{eq:CP_interval} with $d(\mathbf{x}_{i})$ replaced by $\tilde{d}(\mathbf{z}_{i}, \mathbf{y}_{i})$. 
%This approach  \textcolor{blue}{...}, however, 
The validity of this interval, however, depends on the quality of the estimator $\tilde{d}(\mathbf{x})$. 

In the following, we introduce a prediction interval computed based on the state-control trajectory without approximation. We define a norm-2 nonconformity score $s(\mathbf{z}_{i}, \mathbf{y}_{i})$,
\begin{align}
\label{eq:score_CP_states}
    s_{i}(\mathbf{z}_{i}, \mathbf{y}_{i}) = \norm{\hat{\mathbf{y}}_{i} - \mathbf{y}_{i}}_{2}, \forall i\in \mathcal{I}_{2},
\end{align}
where the predicted state $\hat{\mathbf{y}}_{i}$ is obtained from~\eqref{eq:prediction}. 
Given a CP prediction interval satisfying
\begin{align}
    \mathbb{P}\left\{\norm{\hat{\mathbf{y}}_{n+1} - \mathbf{y}_{n+1}}_{2} \leq Q_{1-\alpha/2}(S(\mathbf{z}, \mathbf{y}))\right\} \geq 1-\frac{\alpha}{2}, 
    \label{eq:prediction_interval_state}
\end{align}
%\textcolor{blue}{I need to fix more beyond this point }
we derive a residual prediction interval as follows  
\begin{align}
    \mathbb{P}\Big\{\norm{\hat{d}(\mathbf{x}_{n+1}) - d(\mathbf{x}_{n+1})}_{\infty}& \leq \frac{1}{2}(L_{h}l_{h}+L_{t}l_{t}) \nonumber \\
    +(L_{h}+2)Q_{1-\alpha/2}&(S(\mathbf{z}, \mathbf{y})) \Big\} \geq 1-\alpha.     
    \label{eq:CP-states}
\end{align}

\begin{theorem}
\label{theo:CI_state_to_residual}
Let the true residual $d(\mathbf{x})$, the estimated residual $\hat{d}(\mathbf{x})$, the trajectory $\mathbf{x}_{t}$, and the predicted trajectory $\hat{\mathbf{x}}_{t}$ be Lipschitz continuous functions with Lipschitz constants $L_{t}, L_{h}, l_{t}$ and $l_{h}$ respectively. Given the prediction interval 
\begin{equation}\label{eq:cp_theorem_condition}
    \mathbb{P}\left\{\norm{\hat{\mathbf{x}}_{t} - \mathbf{x}_{t}}_{2} \leq q\right\} \geq 1-\frac{\alpha}{2}, 
\end{equation}
%Then, with probability at least $1-\alpha$, 
the prediction interval for the residual $d(\mathbf{x})$ satisfies  %error between the predicted and true dynamics is bounded as follows,
\begin{align}\label{eq:cp_theorem}
    \mathbb{P}\Big\{ &\norm{\hat{d}(\mathbf{x}_{t}) - d(\mathbf{x}_{t})}_{2}  
    \leq \frac{L_{t}l_{t} + L_{h}l_{h}}{2} + (L_{h}+2)q \Big\} \geq 1-\alpha. \nonumber
\end{align}
\end{theorem}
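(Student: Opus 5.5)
The plan is to compare the true and predicted trajectories over one sampling interval, normalized to unit length $[t,t+1]$, and to convert the state-level event \eqref{eq:cp_theorem_condition} into a residual-level bound. The bound has three parts. The factor $2q$ will come from using the state-interval event at two time instants, $t$ and $t+1$, each holding with probability at least $1-\alpha/2$. A union bound then gives joint probability at least $1-\alpha$, which explains the $\alpha/2$ in the hypothesis. The terms $\tfrac12 L_t l_t$ and $\tfrac12 L_h l_h$ will come from replacing a pointwise residual value by its time-average over the interval. The term $L_h q$ will come from moving the argument of $\hat d$ from $\hat{\mathbf{x}}_t$ to $\mathbf{x}_t$.

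\textbf{Step 1.} Work on the event $E=\{\norm{\hat{\mathbf{x}}_t-\mathbf{x}_t}_2\le q\}\cap\{\norm{\hat{\mathbf{x}}_{t+1}-\mathbf{x}_{t+1}}_2\le q\}$. By \eqref{eq:cp_theorem_condition} applied at both instants and the union bound, $\mathbb{P}(E)\ge 1-\alpha$. Everything below is deterministic on $E$.

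\textbf{Step 2.} Split the residual error with the triangle inequality:
\[
\norm{\hat d(\mathbf{x}_t)-d(\mathbf{x}_t)}_2\le \norm{\hat d(\mathbf{x}_t)-\hat d(\hat{\mathbf{x}}_t)}_2+\norm{\hat d(\hat{\mathbf{x}}_t)-d(\mathbf{x}_t)}_2 .
\]
By Lipschitz continuity of $\hat d$, the first term is at most $L_h q$ on $E$.

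\textbf{Step 3.} For the second term, write both trajectories in integral form, as in \eqref{eq:prediction}:
\[
\mathbf{x}_{t+1}-\mathbf{x}_t=\int_t^{t+1}\big(f+g\mathbf{u}+d(\mathbf{x}_s)\big)\,ds,
\qquad
\hat{\mathbf{x}}_{t+1}-\hat{\mathbf{x}}_t=\int_t^{t+1}\big(f+g\mathbf{u}+\hat d(\hat{\mathbf{x}}_s)\big)\,ds .
\]
Subtracting, and treating the nominal contributions as cancelling (same input, nominal part known), gives
\[
\Big\|\int_t^{t+1}\big(\hat d(\hat{\mathbf{x}}_s)-d(\mathbf{x}_s)\big)\,ds\Big\|_2\le \norm{\hat{\mathbf{x}}_{t+1}-\mathbf{x}_{t+1}}_2+\norm{\hat{\mathbf{x}}_t-\mathbf{x}_t}_2\le 2q .
\]
Next, replace the pointwise values by these averages. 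For the true residual,
\[
\norm{d(\mathbf{x}_s)-d(\mathbf{x}_t)}_2\le L_t l_t (s-t),
\]
and integrating over $[t,t+1]$ gives $\big\|\int_t^{t+1} d(\mathbf{x}_s)\,ds-d(\mathbf{x}_t)\big\|_2\le \tfrac12 L_t l_t$. The same argument for the predicted trajectory gives $\big\|\int_t^{t+1} \hat d(\hat{\mathbf{x}}_s)\,ds-\hat d(\hat{\mathbf{x}}_t)\big\|_2\le \tfrac12 L_h l_h$. Combining these three estimates bounds the second term of Step 2 by $\tfrac12(L_t l_t+L_h l_h)+2q$. Adding the $L_h q$ from Step 2 yields the claimed bound on $E$, and hence with probability at least $1-\alpha$.

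\textbf{Main obstacle.} The delicate step is the cancellation of the nominal terms $f(\mathbf{x}_s)+g(\mathbf{x}_s)\mathbf{u}_s$ versus $f(\hat{\mathbf{x}}_s)+g(\hat{\mathbf{x}}_s)\mathbf{u}_s$ in Step 3. Strictly, this requires either evaluating the nominal part along the observed trajectory, which is how the hybrid prediction is anchored at $\mathbf{x}_{t_i}$ with recorded controls, or absorbing a Lipschitz term for $f$ and $g$. I would first try the former reading: the prediction \eqref{eq:prediction} starts from the observed state and uses the recorded control, so the nominal increment can be taken as common to both trajectories. This also makes the first event in $E$ hold trivially. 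If that reading is not justified, I would add an explicit $L_{f,g}\,q$ correction and note it. A secondary point is the time normalization: with a general step $\Delta t$, the $2q$ and averaging terms scale with $1/\Delta t$ and $\Delta t$ respectively. I would state the result for unit step, matching the form of the bound.
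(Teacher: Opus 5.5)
Your proposal is correct and follows essentially the paper's proof: the same union-bound event at $t$ and $t+1$, the same $L_h q$ split, and the same bound $\norm{\int_t^{t+1} g(s)\,ds}\le 2q$ for $g(s)=\hat d(\hat{\mathbf{x}}_s)-d(\mathbf{x}_s)$. The only difference is that you get $\norm{g(t)}\le 2q+\tfrac{1}{2}(L_t l_t+L_h l_h)$ directly, by comparing $g(t)$ with its average over $[t,t+1]$, whereas the paper argues by contradiction using the unit vector $g(t)/\norm{g(t)}$.

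The nominal-term cancellation you flag is also assumed silently in the paper, which writes $\hat{\mathbf{x}}_t-\mathbf{x}_t=\int_0^t g(s)\,ds$ straight from $\hat{\mathbf{x}}_0=\mathbf{x}_0$, so your primary reading matches it. Your fallback correction, however, would not be simply $L_{f,g}\,q$: it needs $\norm{\hat{\mathbf{x}}_s-\mathbf{x}_s}$ for all $s\in[t,t+1]$, and the two endpoint events give that only after adding a term proportional to $l_t+l_h$ via the trajectory Lipschitz bounds.
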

\begin{proof}
    Let $g(t) = \hat{d}(\hat{\mathbf{x}}_{t}) - d(\mathbf{x}_{t})$. We first prove that $g(t)$ is a $L$-Lipschitz continuous function, where $L=L_{t}l_{t} + L_{h}l_{h}$,
    \begin{align*}
        &\norm{g(t) - g(s)} = \norm{ \hat{d}(\hat{\mathbf{x}}_{t}) - \hat{d}(\hat{\mathbf{x}}_{s}) + d(\mathbf{x}_{s}) - d(\mathbf{x}_{t})}\\
        &\leq L_{h}\norm{\hat{\mathbf{x}}_{t} - \hat{\mathbf{x}}_{s}} + L_{t}\norm{\mathbf{x}_{s} - \mathbf{x}_{t}}\leq (L_{t}l_{t} + L_{h}l_{h})\lvert t-s\rvert.
        %&\leq (L_{t}l_{t} + L_{h}l_{h})\lvert t-s\rvert.
    \end{align*}
    Now we prove $\norm{g(t)} \leq 2q + \frac{L}{2}$. Since $\hat{\mathbf{x}}_{0} = \mathbf{x}_{0}$, we have $\norm{\hat{\mathbf{x}}_{t} - \mathbf{x}_{t}} 
       =\norm{\int_{0}^{t} g(s)ds}$. Then
    \begin{align*}
        &\norm{\int_{t_{1}}^{t_{2}}g(s)ds} = \norm{\int_{0}^{t_{1}}g(s)ds - \int_{0}^{t_{2}}g(s)ds}\\
        &\leq \norm{\int_{0}^{t_{1}}g(s)ds} + \norm{\int_{0}^{t_{2}}g(s)ds}, \forall t_{1}, t_{2}\\
        &= \norm{\hat{\mathbf{x}}_{t_1} - \mathbf{x}_{t_1}}_{2} + \norm{\hat{\mathbf{x}}_{t_2} - \mathbf{x}_{t_2}}_{2} , \forall t_{1}, t_{2}.
        % \leq 2q, \forall t_{1}, t_{2}
    \end{align*}
     By the prediction interval \eqref{eq:cp_theorem_condition}, we have for any $t_1, t_2$,
    \begin{align*}
         &\mathbb{P}\left\{\norm{\int_{t_{1}}^{t_{2}}g(s)ds} \leq 2q\right\}  = 1 -\mathbb{P}\left\{\norm{\int_{t_{1}}^{t_{2}}g(s)ds} > 2q\right\} \\ 
         &\geq \mathbb{P}\left\{\norm{\hat{\mathbf{x}}_{t_1} - \mathbf{x}_{t_1}}_{2} \leq q \text{ and }\norm{\hat{\mathbf{x}}_{t_2} - \mathbf{x}_{t_2}}_{2}\leq q\right\}  \\
         &= 1 - \mathbb{P}\left\{\norm{\hat{\mathbf{x}}_{t_1} - \mathbf{x}_{t_1}}_{2} > q \text{ or }\norm{\hat{\mathbf{x}}_{t_2} - \mathbf{x}_{t_2}}_{2} > q\right\} \\
         & \geq 1 - \mathbb{P}\left\{\norm{\hat{\mathbf{x}}_{t_1} - \mathbf{x}_{t_1}}_{2} > q \right\} - \mathbb{P}\left\{\norm{\hat{\mathbf{x}}_{t_2} - \mathbf{x}_{t_2}}_{2} > q \right\} \\& \geq 1-\alpha, 
    \end{align*}
    Assume $\norm{g(t)} > 2q + \frac{L}{2}$ for a given $t$. Define the unit vector pointing in the direction of $g(t)$ as 
\begin{equation*}
    \mathbf{v} = \frac{g(t)}{\norm{g(t)}}.
\end{equation*}
For every $s \in [t, t+1]$, we have $|s-t| \le 1$. Using the Lipschitz continuity of $g$, we have
\begin{align*}
    \mathbf{v}^\top g(s) &= \mathbf{v}^\top g(t) + \mathbf{v}^\top \big(g(s)-g(t) \big)  \\
    & \geq \norm{g(t)} - \norm{g(s)-g(t)}  \\
    & \geq \norm{g(t)} - L|s-t|  \\
    & > 2q+L \left(\frac{1}{2}- |s-t|\right)  
\end{align*}
    Therefore, 
    \begin{align*}        
        \norm{\int_{t}^{t+1} g(s) ds} &\geq \mathbf{v}^\top \int_{t}^{t+1} g(s)ds = \int_{t}^{t+1} \mathbf{v}^\top g(s)ds  \\
    &> \int_{t}^{t+1} \left[2q+L \left(\frac{1}{2}- |s-t|\right)\right]  ds \\
    &= 2q + L \int_{0}^{1} \left(\frac{1}{2}-u\right) du = 2q.
    \end{align*}
    Hence, if $\norm{\int_{t}^{t+1} g(s) ds} \leq 2q$, we must have $\norm{g(t)} \leq 2q + \frac{L}{2}$. Thus assuming $\norm{\hat{\mathbf{x}}_{t} - \mathbf{x}_{t}}_{2} \leq q$ and $\norm{\hat{\mathbf{x}}_{t+1} - \mathbf{x}_{t+1}}_{2}\leq q$, we have $\norm{\int_{t}^{t+1} g(s) ds} \leq 2q$ and $\norm{g(t)} \leq 2q + \frac{L}{2}$. Therefore,
    \begin{align*}
        &\norm{\hat{d}(\mathbf{x}_{t}) - d(\mathbf{x}_{t})} = \norm{\hat{d}(\mathbf{x}_{t}) - \hat{d}(\hat{\mathbf{x}}_{t}) + \hat{d}(\hat{\mathbf{x}}_{t}) -d(\mathbf{x}_{t})}\\
        &\leq \norm{\hat{d}(\mathbf{x}_{t}) - \hat{d}(\hat{\mathbf{x}}_{t})} + \norm{\hat{d}(\hat{\mathbf{x}}_{t}) - d(\mathbf{x}_{t})}\\
        &\leq L_{h}\norm{\hat{\mathbf{x}}_{t} - \mathbf{x}_{t}} + \norm{g(t)} \leq L_{h}q+2q+\frac{L}{2}.
    \end{align*}
    Since $$\mathbb{P}\left\{\norm{\hat{\mathbf{x}}_{t_1} - \mathbf{x}_{t_1}}_{2} \leq q \text{ and }\norm{\hat{\mathbf{x}}_{t_2} - \mathbf{x}_{t_2}}_{2}\leq q\right\} \geq 1 - \alpha,$$ 
    we conclude that
    \begin{align*}
        \mathbb{P}\Big\{ \norm{\hat{d}(\mathbf{x}_{t}) - d(\mathbf{x}_{t})}  
    \leq \frac{L}{2} + (L_{h}+2)q \Big\} \geq 1-\alpha.
    \end{align*}
\end{proof}

Based on Thm.~\ref{theo:CI_state_to_residual} and the inequality $\norm{\cdot}_{\infty}\leq \norm{\cdot}_{2}$, we complete the derivation of~\eqref{eq:CP-states} from~\eqref{eq:prediction_interval_state}.
\subsection{Online Weighted Conformal Prediction}
\label{sec:online_WCP}
As we discussed earlier, an adaptive CBF cannot afford the safety violations caused by prediction errors due to online learning delays and domain shift under time-varying model perturbations. 
Our online learning is decoupled from the controller. While it allows learning from long-horizon data without throttling the control frequency, it introduces adaptation delays because the controller queries a previously trained model, and the new model is always in training. Domain shift is another factor that leads to prediction error. 

% At each control step, running at 100$\unit{Hz}$, the controller queries a previously trained model while the new model is in training. This decoupled design while allowing the model to be trained with more data, but leads to learning delays.
% The online learning delays is caused by the fact that we design the 

% Two factors contribute to the learning delay: 1) training always uses data collected in the past time windows, 
% %i.e., the training data used for the $(k+1)$-th model comes from data collected while the $k$-th model is training (see Fig.~\ref{fig:data_stream}), 
% and 2) the controller cannot always access the most up-to-date model due to the training delay. 
For these reasons, we propose an online WCP, as shown in Algo.~\ref{alg:online_cp}, that evaluates a prediction interval in real time that accounts for prediction error, while training the new model. We design a separate queue $\mathcal{Q}_{\mathrm{calib}}$ as the CP calibration set and clear it after each model update, to receive the new data that is not used during training. After the $k$-th model update, the $(k+1)$-th model starts training with data $\mathcal{Q}_{\mathrm{train}}$ newly collected from previous time windows, and the $k$-th CP starts to produce prediction intervals for the current ($k$-th) model (see Fig.~\ref{fig:data_stream}). 
Note that the calibration data is not obtained by splitting from the training data, instead obtained from the data stream after model update. The benefits are twofold: (i) we decouple the CP from training. A separate $\mathcal{Q}_{\mathrm{calib}}$ always fetches fresh data and keeps the prediction interval up-to-date, and (ii) we utilize all the recent data for training without splitting for CP, the learned model maximizes its statistical efficiency. 
\begin{algorithm}[t]
\caption{Online Weighted Conformal Prediction}
\label{alg:online_cp}
Initialize the current time, total duration, calibration data as $t_{i}$, $t_{T}$, $\mathcal{Q}_{\mathrm{calib}}$, and the margin of error as a vector of zeros.\\
\While{$t_{i} < t_{T}$} {
    \If {New model is available} {
        Update the model used in CP and clear $\mathcal{Q}_{\mathrm{calib}}$ \\
        }
    Append the newly completed state-control pairs and their sampling times to $\mathcal{Q}_{\mathrm{calib}}$\\
    \If{$\mathcal{Q}_{\mathrm{calib}} \neq \emptyset$} {
        Assign weights to the calibration samples in $\mathcal{Q}_{\mathrm{calib}}$ using~\eqref{eq:CP_weights}\\
        \If {using the finite-difference construction} {
            Compute residual error scores $S(\mathbf{x},\tilde{\mathbf{d}})$ in~\eqref{eq:nonconformity_scores}, with residual labels estimated by~\eqref{eq:finite_difference_dx}\\
            Compute $D$ as in~\eqref{eq:CP-dynamics}
        }
        \Else {
            Compute state prediction error scores $S(\mathbf{z}, \mathbf{y})$ in~\eqref{eq:score_CP_states}\\
            Compute $D$ as in~\eqref{eq:CP-states}
        }
        % Compute nonconformity score set $S$ by~\eqref{eq:nonconformity_scores}\\
        % Obtain $Q_{1-\alpha}(S)$ if we employ bound in~\eqref{eq:CP-dynamics}, or $Q_{1-\alpha/2}(S)$ if~\eqref{eq:CP-states} bound is used.\\
        Update the controller's residual uncertainty margin to $D\mathbf{1}_{p\times 1}$
        % Compute the margin of error $Q_{1-\alpha}(S)\mathbf{1}_{p\times 1}$ in~\eqref{eq:CP_interval} and save
    }
    $t_{i}\leftarrow$ current time
}
\end{algorithm}
\begin{figure}[h]
    \centering
    \includegraphics[width=0.8\linewidth]{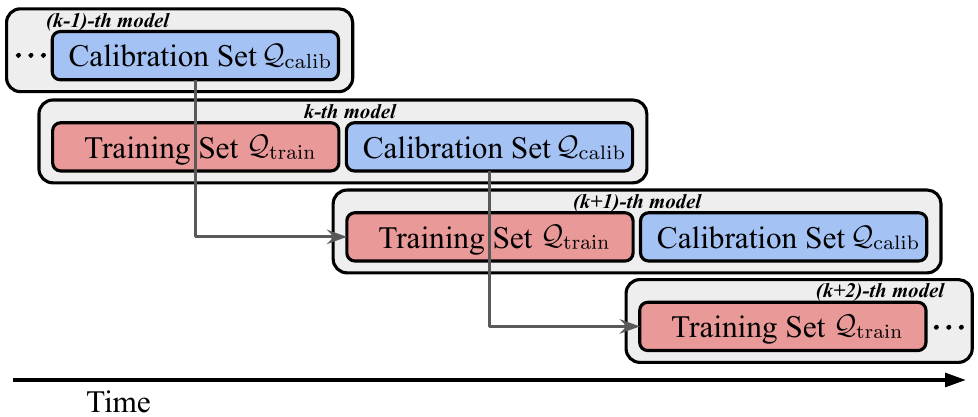}
    \caption{Once the $k$-th model is trained, $\mathcal{Q}_{\mathrm{calib}}$ is cleared and the $k$-th CP starts to quantify its estimation quality. The data collected during its training and prior data in the queue are used for the training of the $(k\!+\!1)$-th model.}
    \label{fig:data_stream}
\end{figure}

\section{Simulations}
\label{sec:simulation}
%\textcolor{blue}{For Nora and Lorenzo: everything exceeding this point is not ready.}
To demonstrate the efficacy of our controller, we design a simulation using a double integrator with different residuals. 
\subsection{Neural Barrier for Double Integrator}
\label{sec:double_integrator_example}
A robot is required to follow a reference trajectory in a workspace containing obstacles $\mathcal{O}_{1},\ldots, \mathcal{O}_{N_{\mathrm{obs}}}$, from which the robot is required to keep a minimum safety distance $D_s$.
The state $\mathbf{x} = [\mathbf{r}; \dot{\mathbf{r}}]\in \mathbb{R}^{6}$ contains position and velocity. The control $\mathbf{u}\in \mathbb{R}^{3}$ is the acceleration. We consider a double integrator nominal model, and the hybrid model follows
\begin{equation}\label{eq:DI_hybrid}
    \dot{\mathbf{x}} := f_h(\mathbf{z}, \hat{d}_{\boldsymbol{\theta}}(\mathbf{x})) = A\mathbf{x} + B\mathbf{u} + \hat{d}_{\boldsymbol{\theta}}(\mathbf{x}),
\end{equation}
where $A = [\mathbf{0}, \mathbf{I}; \mathbf{0}, \mathbf{0}] \in \mathbb{R}^{6\times 6}$, $B = [\mathbf{0}; \mathbf{I}] \in \mathbb{R}^{6\times 3}$.
Here $\mathbf{0} \in \mathbb{R}^{3\times 3}$, $\mathbf{I} \in \mathbb{R}^{3\times 3}$ are the zero and identity matrices, respectively. 
We utilize this simplified model to demonstrate how the residual learning term $\hat{d}_{\boldsymbol{\theta}}(\mathbf{x})$ effectively captures unmodeled nonlinearities and disturbances, guaranteeing safety even with inaccurate nominal dynamics.
At each step, we obtain the desired position, velocity, and acceleration, i.e., $[\mathbf{r}_{\mathrm{des}}; \mathbf{v}_{\mathrm{des}}; \mathbf{a}_{\mathrm{des}}]$ from the reference. 
The desired control $\mathbf{u}_{\mathrm{des}}$ is generated using PID. We investigate the CBF that ensures safety by maintaining a minimum distance from obstacles. 
Specifically, we define a CBF for the $i$-th obstacle as
\begin{align}
    h_{i}(\mathbf{x}) = \norm{\mathbf{r}-\mathbf{r}_{\mathrm{obs},i}}_{2}^{2} - D_{s}^{2}, \forall i\in\{1,\ldots N_{obs}\},
\end{align}
where $\mathbf{r}_{\mathrm{obs},i}$ denotes the position of $i$-th obstacle, and $D_{s}$ is the minimal safety distance between the robot and obstacles. The relative degree of $h_{i}$ is two with respect to the model in~\eqref{eq:DI_hybrid}. We consider the extended class $\mathcal{K}$ functions as $\alpha_{i}(h) = \gamma_{i}h$. The optimal control is found by solving the following QP:
\begin{subequations}
\begin{IEEEeqnarray}{rCl'rCl}
\argmin_{\mathbf{u}} 
&~& \norm{ \mathbf{u} - \mathbf{u}_{\mathrm{des}} } \label{QPcost}\\
\text{s.t.} 
&~& \inf_{d(\mathbf{x})\in C(\mathbf{x}, \boldsymbol{\theta})} \Big\{ \sup_{\mathbf{u} \in U} \Big[L_{f+d}^{2} h(\mathbf{x}) + [L_gL_{f+d}h(\mathbf{x})]\mathbf{u} \nonumber \\ 
&~& + (\gamma_{1}+\gamma_{2})L_{f+d}(h(\mathbf{x})) + \gamma_{1}\gamma_{2}h(\mathbf{x}) \Big] \Big\} \geq 0,\ \label{eq:robust_cbf_constraints}
\label{QPconst:init_state}
\end{IEEEeqnarray}
\end{subequations}
As the true residual lies within the $(1\!-\!\alpha)$ confidence interval $C(\mathbf{x})$, our controller in~\eqref{QPcost} accounts for the prediction error and guarantees safety with at least $(1\!-\!\alpha)$ probability. %In Sec.\ref{sec:conformal_prediction}, we will proof that the ground truth $d(\mathbf{x},t)$ lies within a bound of its estimation $\hat{d}_{\boldsymbol{\theta}}(\mathbf{x}(t))$ with probability $1-\alpha$, $\forall t$, that is,
% \begin{align}\label{eq:dx_bound}
%     P\Big(\norm{d(\mathbf{x}_{t}) - \hat{d}_{\boldsymbol{\theta}}(\mathbf{x}_{t})} \leq p\Big) \geq 1-\alpha,
% \end{align}
% The constraint in~\ref{eq:robust_cbf_constraints} can be rewritten as follows,
% \begin{align*}
%     L_{g}L_{f+d}b &= \Big[(f+d)^{\top}\frac{\partial^{2}b}{\partial x^{2}} + (\frac{\partial b}{\partial x})^{\top}(\frac{\partial f}{\partial x} + \frac{\partial d}{\partial x})^{\top}\Big]g\\
%     L^{2}_{f+d}b &= \Big[(f+d)^{\top}\frac{\partial^{2}b}{\partial x^{2}} + (\frac{\partial b}{\partial x})^{\top}(\frac{\partial f}{\partial x} + \frac{\partial d}{\partial x})^{\top}\Big](f+d)\\
%     L_{f+d}b &= (\frac{\partial b}{\partial x})^{T}(f+d),
% \end{align*}
The constraint in~\eqref{eq:robust_cbf_constraints} can be rewritten as
\begin{align}
    &~~~~~~~~~~~~~~~~~~~~\inf_{d(\mathbf{x})\in C(\mathbf{x}, \boldsymbol{\theta})} (\mathbf{a}^{\top}\mathbf{u} + c) \geq 0\\
    &\mathbf{a} = g^{\top}\frac{\partial^{2}h}{\partial x^{2}}d + g^{\top}\Big[\frac{\partial^{2}h}{\partial x^{2}}f + \Big(\frac{\partial f}{\partial x} + \frac{\partial d}{\partial x}\Big)\frac{\partial h}{\partial x}\Big] \\
    &c = d^{\top}\frac{\partial^{2}h}{\partial x^{2}}d \nonumber\\
    &+ \Big[2f^{\top}\frac{\partial^{2}h}{\partial x^{2}}+\Big(\frac{\partial h}{\partial x}\Big)^{\top}\Big(\frac{\partial f}{\partial x} + \frac{\partial d}{\partial x}\Big)^{\top}\!\!+(\gamma_{1}+\gamma_{2})\Big(\frac{\partial h}{\partial x}\Big)^{\top}\Big]d \nonumber\\
    &+ \Big[f^{\top} \frac{\partial^{2}h}{\partial x^{2}} + \Big(\frac{\partial h}{\partial x}\Big)^{\top}\Big(\frac{\partial f}{\partial x} + \frac{\partial d}{\partial x}\Big)^{\top}\Big]f \nonumber\\
    &+ (\gamma_{1}+\gamma_{2})\Big(\frac{\partial b}{\partial x}\Big)^{\top}f + \gamma_{1}\gamma_{2}b,
\end{align}
where $\frac{\partial f}{\partial x} = [\frac{\partial f_{1}}{\partial x}, \ldots, \frac{\partial f_{6}}{\partial x}]$. 
% While $\mathbf{a}$ is in the linear form w.r.t. $d$ and $c$ is in the quadratic forms w.r.t. $d$, $g^{\top}\frac{\partial^{2}h}{\partial x^{2}}=0$, therefore, $\mathbf{a}$ is independent of $d$. Consequently, the infimum of~\eqref{eq:robust_cbf_constraints} is determined solely by $c$. The Jacobian $\frac{\partial{d}}{\partial{x}}$ term can be computed using autograd. 
Note that $g^{\top}\frac{\partial^{2}h}{\partial x^{2}}=0$, and the Jacobian $\frac{\partial{d}}{\partial{x}}$ term can be computed using autograd. Therefore, $\mathbf{a}$ is independent of $d$, and the infimum of~\eqref{eq:robust_cbf_constraints} is determined solely by $c$. Further, since $\frac{\partial^{2}h}{\partial x^{2}}=\mathrm{diag}(2,2,2,0,0,0)$, $c_{min}$ can be obtained straightforwardly. Therefore, the RaCBF constraint in~\eqref{eq:robust_cbf_constraints} reduces to $\mathbf{a}^{\top}\mathbf{u} + c_{min} \geq 0$.

\subsection{Simulation Setups}
A perturbed double integrator is implemented as $\dot{\mathbf{x}} = A\mathbf{x}+B\mathbf{u}+d(\mathbf{x})$, where we assume the residual $d(\mathbf{x})$ is unknown to the robot in the simulation. Three types of residuals are considered, namely, ``Attractive", ``Repulsive", and ``Time-varying". In ``Attractive" and ``Repulsive" experiments, the robot is attracted and repulsed towards and away from the center of the obstacle, i.e., $d_{\mathrm{att}}(\mathbf{x})=[\mathbf{0}; -k \mathbf{r}]$, $d_{\mathrm{rep}}(\mathbf{x})=[\mathbf{0}; k \mathbf{r}]$, where $k\!=\!0.4$. In ``Time-varying" experiment, residuals alternate between repulsive and attractive over time, i.e., $d_{\mathrm{var}}(\mathbf{x})=[\mathbf{0}, k\sin(t)\mathbf{r}]$, where $k\!=\!1.0$. %The motion of the robot in the simulation is governed by such a perturbed double integrator. 
A Runge–Kutta (RK4) numerical solver with sampling time of  $\Delta t=0.01\unit{s}$ is used to simulate responses given the control. 
% A closed-loop PID controller synthesizes the desired closed-loop response $\mathbf{u}_{\mathrm{des}}$.
We assume  the robot state-control pair $\mathbf{z}:=[\mathbf{x};\mathbf{u}]$ can be measured at $100\unit{Hz}$. The closed-loop control runs at $100\unit{Hz}$. We use a similar neural network architecture to~\cite{jiahao2023online}. The value of $\hat{d}_{\boldsymbol{\theta}}$ is computed as the weighted sum of three parallel neural networks with the same architecture. Each neural network has two hidden layers, each with only $16$ neurons, where the hyperbolic tangent activation is applied after the first hidden layer. The parallel structure enables continual adaptation while preserving previously learned dynamics. The queue size is fixed to three: in each training epoch, a new neural network is added, and the oldest one is removed if the queue is at capacity. Older neural networks are combined using exponential forgetting weights. This facilitates online adaptation and avoids instability in the closed-loop controller. We set the residual coverage target to $(1-\alpha)=0.6$. Accordingly, the finite-difference construction uses the $0.6$ quantile in~\eqref{eq:CP-dynamics}, whereas the state-based construction uses the $0.8$ quantile in~\eqref{eq:prediction_interval_state}, which is then transformed into a residual uncertainty bound through~\eqref{eq:CP-states}.

%For a safe controller with moderate conservativeness, we set a confidence level $(1-\alpha)=0.6$ for the probability bounds computed from WCP, which means the safety is guaranteed with a probability of at least $60\%$ at all time.

% The robot is commanded to follow a circular reference trajectory of radius $2\unit{m}$ at height $1\unit{m}$, centered at the x-y plane origin, where a spherical forbidden region of radius $D_{s}=3\unit{m}$ is also located. 
A closed-loop PID synthesizes the desired control $\mathbf{u}_{\mathrm{des}}$ to track a circular reference of radius $2\unit{m}$ at height $1\unit{m}$, centered at the origin of the x-y plane, with constant angular velocity $0.5\unit{rad/s}$. A spherical obstacle of radius $D_s = 3\unit{m}$ is also located at the origin. 
Thus, the CBF must be effective to avoid safety violations. 
%o satisfy the safety, the 
%at a height of $1\unit{m}$ with a radius of $2\unit{m}$. The spherical obstacle is centered at the origin of the x-y plane at a height of $1\unit{m}$ with a radius of $3\unit{m}$.
%size of $\mathcal{Q}_{\mathrm{train}}$ is $400$, and 
Our Neural Barrier controller supports both offline and online modes as described in Sec.~\ref{sec:KNODE_online}. For our online controller, we maintain an FIFO queue of $10\unit{s}$ of online data for training. For our offline model, we collect $40\unit{s}$ of state-control data for training.
% \begin{align*}
%     d(\mathbf{x}) = 
% \end{align*}

\noindent\textbf{Benchmarking.} We compare the safety performance between: ($i$) robust HOCBFs controller, when there does not exist residual dynamics~\cite{xiao2021high} (we use the open-source implementation from~\cite{pan2025robust}), ($ii$) robust HOCBFs controller, ($iii$) HO-RaCBFs controller~\cite{isaly2021adaptive, cohen2022high}, ($iv$) our online Neural Barrier controller, and ($v$) our offline Neural Barrier controller. 
The robust HOCBF controller without residual dynamics serves as the guideline performance. 
The comparison of our online and offline controllers with robust HOCBF controller demonstrates that our controllers guarantee safety and reduce the controller conservativeness by adapting to model perturbations. The comparison of our online and offline controllers with HO-RaCBFs shows that our controllers guarantee safety even under a time-varying model perturbation, without knowledge of residual dynamics nor tuning, highlighting the flexibility of our controller.

% can handle time-varying residuals, without knowledge of residuals nor extensive tuning. 
%verify the safety improvement and the model's rich expressiveness by
%We compare our online and offline NODE-RaCBFs, against Robust HOCBFs and HO-RaCBFs~\cite{cohen2022high}. The comparison against Robust HOCBFs showcases our method reduces control conservativeness as the model sees more data while guaranteeing safety online. The comparison against HO-RaCBFs demonstrates our methods provide guarantees even when a nonstationary and time-varying residual is present. 
In CBF constraints of all controllers, we use extended class $\mathcal{K}$ functions $\alpha_{i}(h) = \gamma_{i}h$, where $\gamma_{i}=3$ for $i=1,2$. For the HO-RaCBFs~\cite{cohen2022high}, it is assumed that $d(\mathbf{x})=Y(\mathbf{x})\hat{\boldsymbol{\vartheta}}$, where $Y(\mathbf{x})\in \mathbb{R}^{p \times s}$ is a known matrix and $\hat{\boldsymbol{\vartheta}}\in \mathbb{R}^{s}$ is a constant vector to estimate. To update the parameter $\hat{\boldsymbol{\vartheta}}$, HO-RaCBFs require computing its update law $\dot{\hat{\boldsymbol{\vartheta}}}$ using an adaptation gain $\kappa \in \mathbb{R}_{>0}$ and a positive definite gain matrix $\Gamma \in \mathbb{R}^{s \times s}$ (see~\cite[eq.~(14)]{cohen2022high}). We extensively tune $\kappa$ and $\Gamma$ in $\dot{\hat{\boldsymbol{\vartheta}}}$ and assume a known knowledge matrix $Y(\mathbf{x})=[\mathbf{0}_{3\times 3};-r_{1},0,0;0,-r_{2},0;0,0,-(r_{3}-1)]$. Note that this is a strong assumption, and we usually do not have prior knowledge about $Y(\mathbf{x})$ in practice. In addition, we also benchmark RaCBF controller without prior knowledge, where $Y(\mathbf{x})=[\mathbf{0}_{3\times 3};-1,0,0;0,-1,0;0,0,-1]$. To compute the HO-RaCBF controller at $100\unit{Hz}$, it only supports historical data with a length of $0.5\unit{s}$. In the following section, we discuss that increasing the HO-RaCBF's data horizon does not improve its performance significantly, yet prohibits its online performance. 
For HO-RaCBFs~\cite{cohen2022high}, we set the bounds of the estimate $\hat{\boldsymbol{\vartheta}}$ between $[-4,-4,-4]$ and $[4,4,4]$, which cover the range of the underlying residual given our choice of $Y(\mathbf{x})$.
% To verify the probabilistic safety guarantee in the , reduced control conservativeness of our algorithm, 
% \noindent\textbf{Adaptation.} We verify the safety improvement of our algorithm by comparing against baseline HOCBFs, where we use the open-source implementation from~\cite{pan2025robust}. We compare the safety performance among baseline, offline NODE-aCBFs and online NODE-aCBFs controllers. This comparison provides insights on how neural network-equipped CBFs improve safety by adapting the residual dynamics, and the adaptation along is insufficient for a safety guarantee. 
% \noindent\textbf{Safety Guarantee and Reduced conservatism.} We verify the probabilistic safety guarantee, reduced controller conservativeness and the coverage of conformal prediction. 
% We compare the safety performance among robust HOCBF, offline NODE-RaCBFs and online NODE-RaCBFs controllers. This comparison demonstrates that NODE-RaCBFs guarantee safety online, while reduces the controller conservativeness as the model sees more data. 
\subsection{Simulation Qualitative Results} 
\begin{figure}[tb]
    \centering
    {\includegraphics[width=0.48\textwidth]{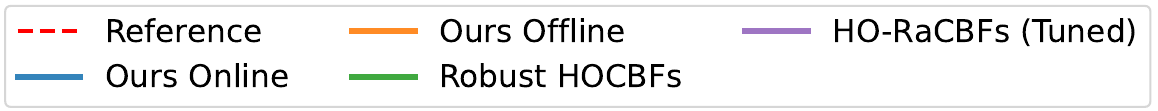}}\\
    {\includegraphics[width=0.239\textwidth]{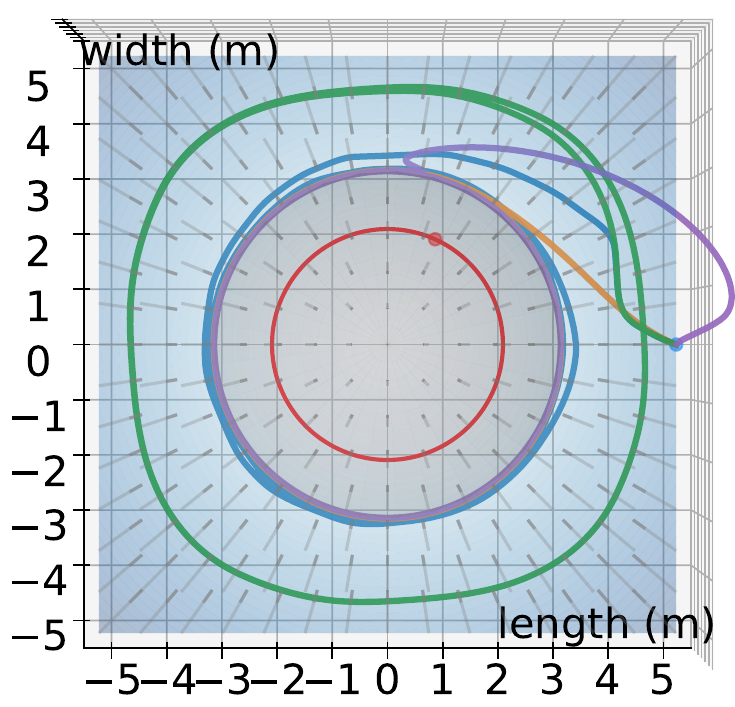}}
    % {\includegraphics[width=0.228\textwidth]{images/Benchmarking/circle_RaCBFs_bench_marking_repulsive.pdf}}
    {\includegraphics[width=0.239\textwidth]{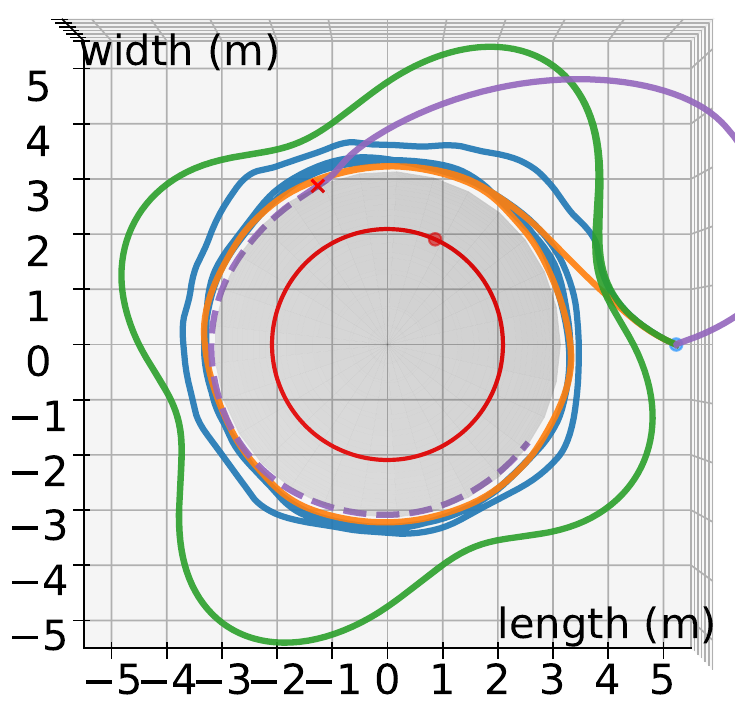}}\\
    {\includegraphics[width=0.49\textwidth]{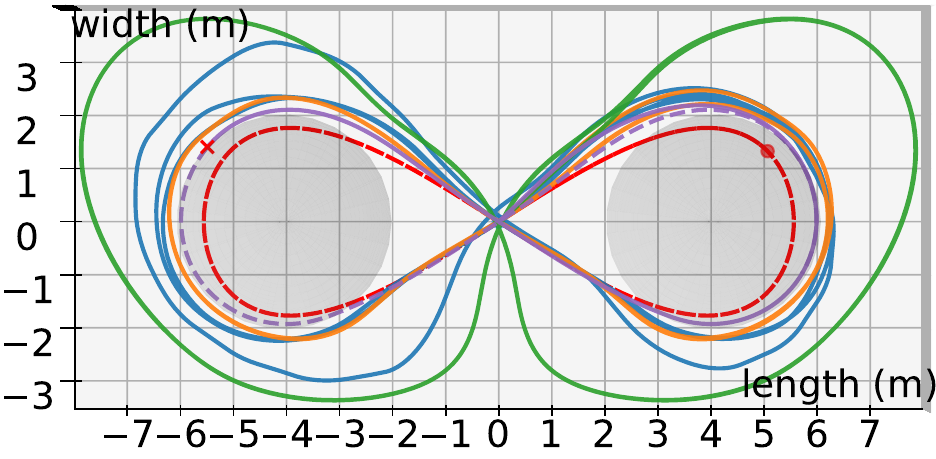}}
    \caption{Typical trajectories of the baseline and our controllers under different model perturbations. single-obstacle ``Attractive" residual (top left). single-obstacle ``Time-varying" residual (top right) and multi-obstacle ``Time-varying" residual (bottom). The Gray sphere is the obstacle. The blue dot represents the initial state, and the red dot represents the current reference. The red cross indicates the collision. Once the robot collided, the trajectory is depicted in a dashed line.}
    \label{fig:qualitative}
    % \vspace{-2em}
\end{figure}
We illustrate representative trajectories of the robust HOCBF, HO-RaCBF, and our controllers in the simulation in Fig.~\ref{fig:qualitative}. Due to space limitations, we include only the single-obstacle ``Attractive" and ``Time-varying" experiments (as the ``Repulsive" result is similar to the ``Attractive" one), as well as the multiple-obstacle ``Time-varying" experiment. In all shown experiments, the robust HOCBF and both our online and offline controllers preserve safety, while the HO-RaCBF controller violates the safety. For the HO-RaCBF controller, we adopt $Y(\mathbf{x})$ without prior knowledge, since the estimated residual using $Y(\mathbf{x})$ with knowledge fails to converge in the multiple obstacle setting, resulting in a faulty controller. Compared to the robust HOCBF, our online controller becomes less conservative as it sees more data, and our offline controller has reduced conservativeness to begin with, while guaranteeing safety with a pre-defined probability. In these two simulation instances, we notice that the safety is preserved with a higher probability compared to the pre-defined value, i.e., $100\%$ achieved v.s. $60\%$ demanded.
%and leads to robust controller even under time-varying model perturbations. 
On the other hand, the HO-RaCBF controller results in safety violations when its conservativeness is reduced.

Note that the online Neural Barrier controller results in a slightly noisier and slower adaptation at the beginning compared to the HO-RaCBF controller, as depicted in Fig.~\ref{fig:qualitative}. The HO-RaCBF controller updates the estimated residual at each control step, instead, our Neural Barrier learns a neural network alongside the control loop. We call this noisier and slower adaptation a learning delay. Despite this delay, the neural network provides the nonlinearity required by complex residuals and learns an underlying manifold, enabling it to generalize to unseen areas. Decoupling learning from the control loop enables our Neural Barrier to learn patterns from more data without slowing the control loop. We highlight that our controllers successfully adapt to the time-varying residual while preserving safety, whereas existing RaCBFs controller fails to adapt without causing collisions, despite extensive tuning. 
% \begin{figure}[h]
%     \centering
%     % {\includegraphics[width=0.48\textwidth]{images/RaCBFBaselineComparison/RaCBFs_baseline_compare_legend.pdf}}\\
%     \includegraphics[width=0.248\textwidth]{images/CPState_bound/circle_RaCBFs_bench_marking_timevarying.pdf}
%     \caption{Trajectories of offline Neural Barrier controller with confidence interval computed from Thm.~\ref{theo:CI_state_to_residual}. The model perturbations are ``Time-varying" with different magnitude $k$.}
%     \label{fig:cp_state_Lipschitz_value}
%     % \vspace{-2em}
% \end{figure}
\begin{remark}\label{rmk:double_integrator_CP_bound}
Assuming a double-integrator model, we will derive a tighter bound for Thm.~\ref{theo:CI_state_to_residual}. Assume we know that $d(\mathbf{x})=[\mathbf{0}; k \mathbf{r}]$ and $\mathbf{u} \gg \dot{\mathbf{r}}$. %For a small \an{Consider a very small time step and assume that $\mathbf{u} >> \dot{\mathbf{r}}$}. 
Given the assumptions in Thm.~\ref{theo:CI_state_to_residual}, the residual $d(\mathbf{x})$ prediction interval satisfies
\begin{align*}
    \mathbb{P}\Big\{ &\norm{\hat{d}(\mathbf{x}_{t}) - d(\mathbf{x}_{t})}_{2}\leq \frac{L}{2} + (L_{h}+2)q \Big\} \geq 1-\alpha,
\end{align*}
%TODO: Note that t is very small (0.01s)
where $L=L_{t}l_{t} + L_{h}l_{h}$, is derived as the Lipschitz constant of  $g(t) = \hat{d}(\hat{\mathbf{x}}_{t}) - d(\mathbf{x}_{t})$. In this case, we will derive an alternate formulation for $L$. In particular, we have
    \begin{align*}
        \norm{g(t) - g(s)} &= \norm{\hat{d}(\hat{\mathbf{x}}_{t}) - \hat{d}(\hat{\mathbf{x}}_{s}) + d(\mathbf{x}_{s}) - d(\mathbf{x}_{t})}\\
        &\leq \norm{\hat{d}(\hat{\mathbf{x}}_{t}) - \hat{d}(\hat{\mathbf{x}}_{s})} +\norm{ d(\mathbf{x}_{s}) - d(\mathbf{x}_{t})},
        % \\
        % &\leq L_{h}\norm{\hat{\mathbf{x}}_{t} - \hat{\mathbf{x}}_{s}} + L_{t}\norm{\mathbf{x}_{s} - \mathbf{x}_{t}},
    \end{align*}
where 
\begin{align*}
\norm{d(\mathbf{x}_{s}) - d(\mathbf{x}_{t})} &= \norm{-k \mathbf{r_s} + k \mathbf{r_t}} = k \norm{\mathbf{r_t} -\mathbf{r_s}}\\
&\approx k\norm{\dot{\textbf{r}_t}}|t-s|\\
&\text{(given by assumption, $\lvert t-s\rvert$ is very small)}
    \end{align*}
and
    \begin{align*}
    \norm{\hat{d}(\hat{\mathbf{x}}_{t}) - \hat{d}(\hat{\mathbf{x}}_{s})} &\leq L_h\norm{\hat{\mathbf{x}}_{t} - \hat{\mathbf{x}}_{s}}_2 \\ &= L_h\sqrt{\norm{\hat{\mathbf{r}}_{t} - \hat{\mathbf{r}}_{s}}_2^2 +\norm{\hat{\mathbf{v}}_{t} - \hat{\mathbf{v}}_{s}}_2^2} \\
        &= L_h\sqrt{\norm{\hat{\dot{\mathbf{r}}}_{t}}_2^2 +\norm{\mathbf{u}_{t}}_2^2} \,\lvert t-s\rvert \\
        &\approx L_h \norm{\mathbf{u}} \lvert t-s\rvert \\
        &\text{(given by assumption, $\mathbf{u} >> \dot{\mathbf{r}}$)}.
    \end{align*}
Therefore, we have
\begin{align*}
    \norm{g(t) - g(s)} &\leq ( k\norm{\mathbf{\dot{r}}}+L_h \norm{\mathbf{u}}) \lvert t - s \rvert.
\end{align*}
Hence now the residual $d(\mathbf{x})$ prediction interval satisfies  \begin{align}\label{eq:tight_cp_theorem}
    \mathbb{P}\Big\{ &\norm{\hat{d}(\mathbf{x}_{t}) - d(\mathbf{x}_{t})}_{2}& \nonumber\\ 
    &\leq \frac{k\norm{\mathbf{\dot{r}}} + L_{h}\norm{{\mathbf{u}}}}{2} + (L_{h}+2)q \Big\} \geq 1-\alpha. 
\end{align}

\end{remark}
Our Neural Barrier, when equipped with the confidence interval from Thm.~\ref{theo:CI_state_to_residual}, guarantees safety but yields a more conservative controller. The conservativeness is primarily introduced by the term $L_{h}l_{h}+L_{t}l_{t}$ in~\eqref{eq:CP-states}, which, for a double integrator model, can be approximated as $k\norm{\mathbf{\dot{r}}} + L_{h}\norm{\mathbf{u}}$ (see Remark.~\ref{rmk:double_integrator_CP_bound}), where the $k$ is the Lipschitz continuity of the true residual dynamics. In practice, $\norm{\mathbf{\dot{r}}}$ and $\norm{\mathbf{u}}$ are estimated empirically from experiments, while $L_{h}$ is bounded through the spectral norm layers of the neural network. As a result, residual dynamics with a large Lipschitz constant produce an overly conservative controller. In Fig.~\ref{fig:cp_state_Lipschitz_value}, we show the trajectories of the Neural Barrier controller in the ``Time-varying" experiment for $k=0.1, 0.2, 0.3, 0.4$. We conclude that the confidence interval from Thm.~\ref{theo:CI_state_to_residual} preserves safety, and the additional robustness term makes the controller more conservative, with the degree of conservativeness largely dominated by the Lipschitz continuity of the true residual dynamics.

\begin{figure}[t]
    \centering
    {\includegraphics[width=0.27\textwidth]{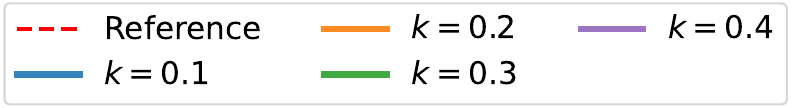}}\\
    % {\includegraphics[width=0.48\textwidth]{images/RaCBFBaselineComparison/RaCBFs_baseline_compare_legend.pdf}}\\
    \includegraphics[width=0.248\textwidth]{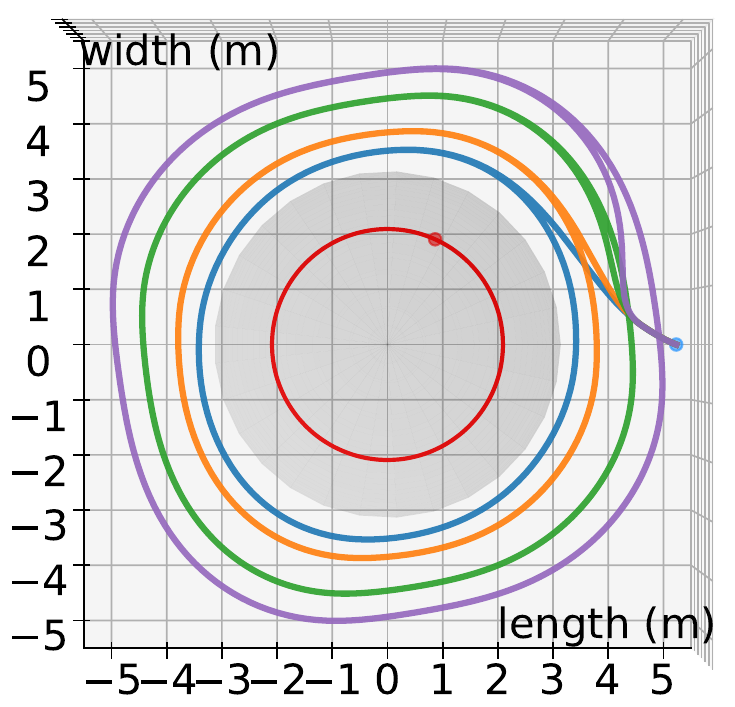}
    \caption{Trajectories of offline Neural Barrier controller with confidence interval computed from Thm.~\ref{theo:CI_state_to_residual}. The model perturbations are ``Time-varying" with different magnitude $k$.}
    \label{fig:cp_state_Lipschitz_value}
    % \vspace{-2em}
\end{figure}
\begin{figure}[t]
    \centering
    % {\includegraphics[width=0.48\textwidth]{images/RaCBFBaselineComparison/RaCBFs_baseline_compare_legend.pdf}}\\
    {\includegraphics[width=0.49\textwidth]{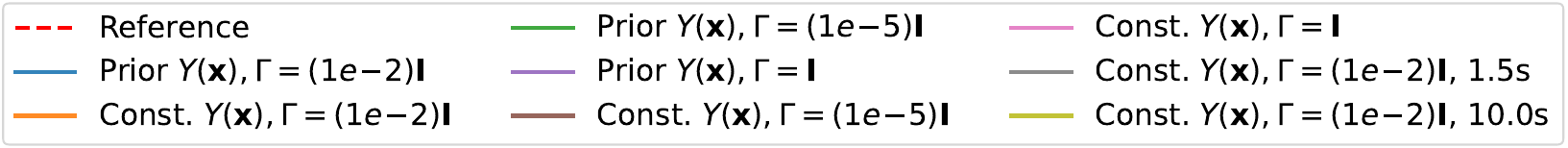}}\\
    \subfloat[Time-varying: different $\Gamma$ and $Y(\mathbf{x})$]
    {\includegraphics[width=0.228\textwidth]{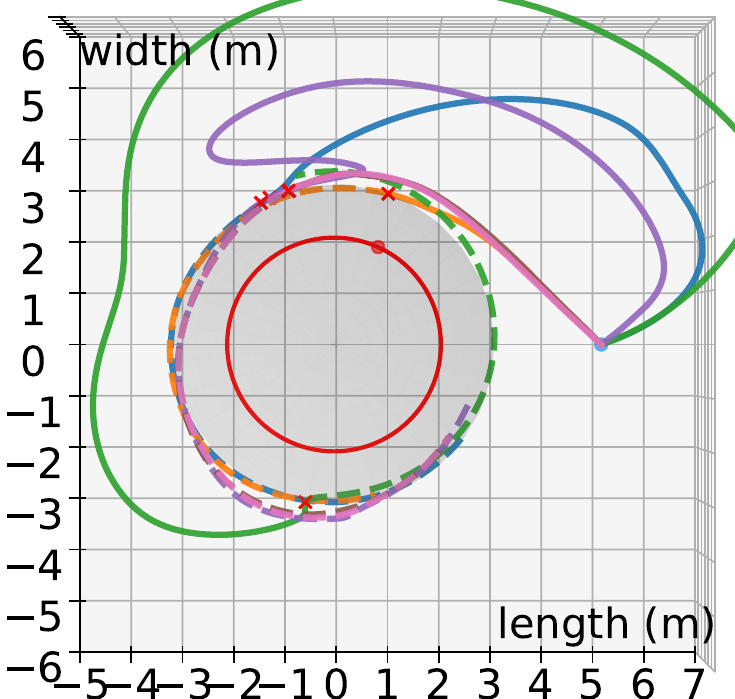}\label{fig:horacbf_analysis_single_tunning}} 
    \subfloat[Time-varying: different data horizon]
    {\includegraphics[width=0.248\textwidth]{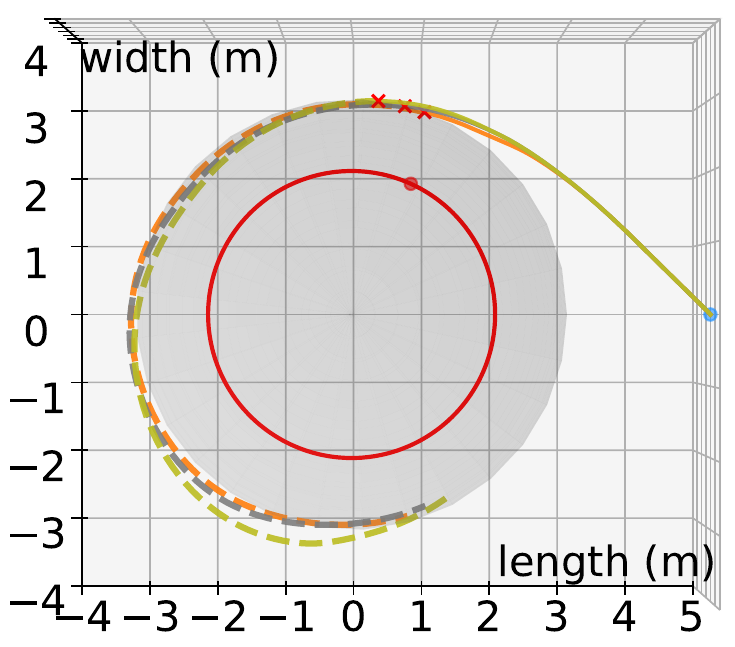}\label{fig:horacbf_analysis_single_data_horizon}}\\
    \subfloat[Time-varying: different $\Gamma$ and $Y(\mathbf{x})$]
    {\includegraphics[width=0.49\textwidth]{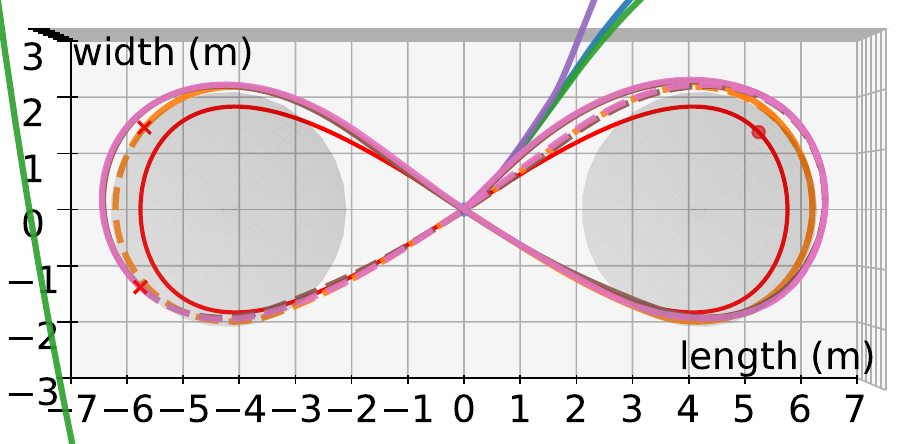}\label{fig:horacbf_analysis_multiple_tunning}}
    \caption{Trajectories of the HO-RaCBF controller with different hyperparameters. The blue dot represents the initial state, and the red dot represents the current reference. The red cross indicates the collision location. Once the robot collided, the trajectory is depicted in a dashed line.}
    \label{fig:qualitative_compare_horacbf}
    % \vspace{-3em}
\end{figure}

\begin{table*}[!t]
% \hspace*{-0.1cm}
\centering
\scalebox{0.9}{
\begin{tabular}{|c||c|c|c|c|c||c|c|c|c|c||c|c|c|c|c|}
\hline
\multicolumn{1}{|c||}{Residual Type}  & \multicolumn{5}{c||}{ Attractive } & \multicolumn{5}{c||}{ Repulsive } & \multicolumn{5}{c|}{ Time-varying }\\ %& \multicolumn{1}{c||}{ Distance to Obstacles }  & \multicolumn{1}{c|}{ CP Coverage } \\
\hline
\multicolumn{1}{|c||}{Method} & \multicolumn{1}{c|}{$h_{\mathrm{min}}$} & \multicolumn{1}{c|}{$h_{\mathrm{neg}}$} & \multicolumn{1}{c|}{Avg.} & \multicolumn{1}{c|}{Avg.} & \multicolumn{1}{c||}{CP}  & \multicolumn{1}{c|}{$h_{\mathrm{min}}$} & \multicolumn{1}{c|}{$h_{\mathrm{neg}}$} & \multicolumn{1}{c|}{Avg.} & \multicolumn{1}{c|}{Avg.} & \multicolumn{1}{c||}{CP} & \multicolumn{1}{c|}{$h_{\mathrm{min}}$} & \multicolumn{1}{c|}{$h_{\mathrm{neg}}$} & \multicolumn{1}{c|}{Avg.} & \multicolumn{1}{c|}{Avg.} & \multicolumn{1}{c|}{CP}  \\
\multicolumn{1}{|c||}{} & \multicolumn{1}{c|}{} & \multicolumn{1}{c|}{} & \multicolumn{1}{c|}{Dist.} & \multicolumn{1}{c|}{sDist.} & \multicolumn{1}{c||}{Cover.}  & \multicolumn{1}{c|}{} & \multicolumn{1}{c|}{} & \multicolumn{1}{c|}{Dist.} & \multicolumn{1}{c|}{sDist.} & \multicolumn{1}{c||}{Cover.} & \multicolumn{1}{c|}{} & \multicolumn{1}{c|}{} & \multicolumn{1}{c|}{Dist.} & \multicolumn{1}{c|}{sDist.} & \multicolumn{1}{c|}{Cover.}  \\
\hline
RCBFs No Res. & 0.02 & 0.0\% & 0.04 & 0.00 & 100\%
& 0.02 & 0.0\% & 0.04 & 0.00 & 100\%
& 0.02 & 0.0\% & 0.04 & 0.00 & 100\% \\
\hline
RCBFs & 8.31  & 0.0\% & 1.61 & 1.63 & 100\%
& 10.19 & 0.0\% & 2.96 & 3.08 & 100\%
& 5.57 & 0.0\% & 1.59 & 1.61 & 100\% \\
\hline
RaCBFs (Tuned) & 0.00 & 0.0\% & 0.10 & 0.00 & -
& 0.0 & 0.0\% & 0.85 & 0.0 & -
& -0.42 & 42.5\% & 0.28 & 0.05 & - \\
\hline
RaCBFs $\Gamma=1$ & -0.73 & 94.1\% & 0.21 & 0.12 & -
& 0.87 & 0.0\% & 0.35 & 0.16 & -
& -1.44 & 44.3\% & 0.33 & 0.19 & - \\
\hline
RaCBFs Const. $Y$ & -0.02 & 91.8\% & 0.04 & 0.00 & -
& 0.02 & 0.0\% & 0.05 & 0.01 & -
& -0.55 & 46.7\% & 0.11 & 0.07 & - \\
\hline
Ours Online & 0.11 & 0.0\% & 0.16 & 0.05 & 59.36\% 
& 0.07 & 0.0\% & 0.18 & 0.06 & 67.29\%
& 0.56 & 0.0\% & 0.36 & 0.18 & 63.91\% \\
\hline
Ours Offline & 0.03 &  0.0\% & 0.05 & 0.01 & 82.41\%
& 0.02 & 0.0\%  & 0.05 & 0.01 & 76.14\%
& 0.43 & 0.0\% & 0.16 & 0.12 & 70.05\% \\
\hline
Ours Online States & 6.26 & 0.0\% & 1.10 & 1.09 & 79.64\% 
& 9.1 & 0.0\% & 1.45 & 1.43 & 82.56\%
& 6.66 & 0.0\% & 2.62 & 2.71 & 97.28\% \\
\hline
Ours Offline States & 6.57 & 0.0\% & 1.10 & 1.09 & 99.13\%
& 8.52 & 0.0\% & 1.43 & 1.43 & 91.70\%
& 9.29 & 0.0\% & 2.63 & 2.68 & 99.00\% \\
\hline
\end{tabular}
}
\caption{Quantitative results for different residuals in single-obstacle simulation: 
Comparison between the baseline controllers and our Neural Barrier controllers (The ``HO-" prefix is dropped in all methods due to space limit). The statistics are averaged across 10 trials.}
\label{table:single_quantitative}
% \vspace{-2em}
\end{table*}
\begin{table*}[!t]
\centering
\scalebox{0.87}{
\begin{tabular}{|c||c|c|c|c|c||c|c|c|c|c||c|c|c|c|c|}
\hline
\multicolumn{1}{|c||}{Residual Type}
& \multicolumn{5}{c||}{Attractive}
& \multicolumn{5}{c||}{Repulsive}
& \multicolumn{5}{c|}{Time-varying} \\
\hline
Method
& $h_{\mathrm{min}}$ & $h_{\mathrm{neg}}$ & Avg. & Avg. & CP
& $h_{\mathrm{min}}$ & $h_{\mathrm{neg}}$ & Avg. & Avg. & CP
& $h_{\mathrm{min}}$ & $h_{\mathrm{neg}}$ & Avg. & Avg. & CP \\
& & & Dist. & sDist. & Cover.
& & & Dist. & sDist. & Cover.
& & & Dist. & sDist. & Cover. \\
\hline
RCBFs No Res.
& 0.04 & 0.0\% & 0.34 & 0.30 & 100\%
& 0.04 & 0.0\% & 0.34 & 0.30 & 100\%
& 0.04 & 0.0\% & 0.34 & 0.30 & 100\% \\
\hline
RCBFs
& 3.03 & 0.0\% & 2.44 & 2.48 & 100\%
& 2.69 & 0.0\% & 3.50 & 3.65 & 100\%
& 2.96 & 0.0\% & 1.57 & 1.57 & 100\% \\
\hline
RaCBFs (Tuned)
& -0.04 & 31.9\% & 0.35 & 0.32 & -
& 0.06 & 0.0\% & 0.34 & 0.30 & -
& -0.39 & 41.4\% & 0.37 & 0.34 & - \\
\hline
RaCBFs $\Gamma=1$\,
& -1.43 & 42.5\% & 0.52 & 0.48 & -
& 1.57 & 0.0\% & 0.87 & 0.86 & -
& -0.61 & 14.9\% & 0.55 & 0.50 & - \\
\hline
RaCBFs Know. $Y$
& 0.01 & 0.0\% & 1.58 & 0.31 & -
& 5.46 & 0.0\% & 4.9e+09 & 9.7e+09 & -
& 5.44 & 0.0\% & 145.41 & 248.92 & - \\
\hline
Ours Online
& 0.17 & 0.0\% & 0.47 & 0.36 & 56.91\%
& 0.18 & 0.0\% & 0.48 & 0.36 & 58.04\%
& 0.43 & 0.0\% & 0.77 & 0.58 & 60.36\% \\
\hline
Ours Offline
& 0.11 & 0.0\% & 0.36 & 0.33 & 71.15\%
& 0.05 & 0.0\% & 0.35 & 0.31 & 73.91\%
& 0.32 & 0.0\% & 0.47 & 0.43 & 57.33\% \\
\hline
Ours Online States
& 2.85 & 0.0\% & 1.25 & 1.27 & 69.05\%
& 2.75 & 0.0\% & 1.79 & 1.82 & 71.65\%
& 3.07 & 0.0\% & 2.91 & 3.03 & 97.28\% \\
\hline
Ours Offline States
& 5.07 & 0.0\% & 1.27 & 1.27 & 77.62\%
& 5.86 & 0.0\% & 1.60 & 1.60 & 65.65\%
& 7.41 & 0.0\% & 3.06 & 3.15 & 99.00\% \\
\hline
\end{tabular}
}
\caption{Quantitative results for different residuals in multi-obstacle simulation: 
Comparison between the baseline controllers and our Neural Barrier controllers (The ``HO-" prefix is dropped in all methods due to space limit). The statistics are averaged across 10 trials.
}
\label{table:multiple_quantitative}
% \vspace{-2em}
\end{table*}

\subsection{Sensitivity Analysis for RaCBF controllers}
The RaCBF controller is sensitive to tuning and can lose its safety guarantee when a nonlinear or time-varying residual is not modeled in its knowledge matrix $Y(\mathbf{x})$. Moreover, its performance does not scale with data, as its residual model's expressiveness is limited. In Fig.~\ref{fig:qualitative_compare_horacbf}, we examine the effects of gain tuning, prior knowledge $Y(\mathbf{x})$, and data horizon on the HO-RaCBF controller. Fig.~\ref{fig:horacbf_analysis_single_tunning} and Fig.~\ref{fig:horacbf_analysis_multiple_tunning} compare different choices of $\Gamma$ and $Y(\mathbf{x})$. 

A $Y(\mathbf{x})$ with prior knowledge leads to more sensitive adaptation in the single-obstacle setting; Adaptation can be achieved in the ``Attractive" and ``Repulsive" residuals when $\Gamma$ is well tuned. However, in the multiple-obstacle setting, the same choice of $Y(\mathbf{x})$ results in non-convergent residual estimates and destabilizes the controller, regardless of the tuning of $\Gamma$. More broadly, the control performance is sensitive to the choice of $\Gamma$: a well-tuned $\Gamma$ (blue) leads to a fast response to the residual model change without overshoot, whereas a large $\Gamma$ (purple) leads to overshoot, and a small $\Gamma$ (green) leads to sluggish adaptation. 

In contrast, a $Y(\mathbf{x})$ without prior knowledge results in slower and less accurate adaptation, and safety is violated under all simulated model perturbations in both single- and multiple-obstacle settings. Although a tuned $\Gamma$ (orange) adapts the residual noticeably better than a large or small $\Gamma$ (pink and brown), it still fails to preserve safety. Fig.~\ref{fig:horacbf_analysis_single_data_horizon} further shows that increasing the data horizon from $0.5\unit{s}$ to $1.5\unit{s}$ and $10.0\unit{s}$ does not improve adaptation, due to the limited expressiveness of the HO-RaCBF controller. 
%; in fact, a $10.0\unit{s}$ horizon can degrade performance, as a linear residual model in the HO-RaCBF controller does not effectively learn from a long-horizon time-varying data. 
In addition, learning from a longer horizon significantly slows the control loop, breaking the required control frequency. These prominent drawbacks---fragility to gain tuning, reliance on carefully chosen prior knowledge to preserve safety (yet sensitive to prior knowledge in different instances), and limited expressiveness and limited scale of data without slowing down control frequency---make HO-RaCBF secondary compared to our approach.

\subsection{Simulation Quantitative Results}
We use the following quantitative evaluation criteria:%\newline
\begin{itemize}
%\textbf{Safety}: The experiment is considered safe if the robot is always outside of the avoid region (a small distance $\epsilon=10^{-3}$ is tolerated for numerical accuracy). We compute the average safe rate over all experiments.\newline
\item \textit{$h_{\mathrm{min}}$}: the minimum value of $h(\mathbf{x})$  during  execution. %\newline
\item \textit{$h_{\mathrm{neg}}$}: the fractional time for which $h(\mathbf{x})<0$ over execution time. %\newline
\item \textit{Average Distance (Avg.Dist.)}: the average distance to the obstacle surface over execution time. %We exclude the initial phase when the robot is distant from the obstacle. 
%\newline
\item \textit{Average settleDistance (Avg.sDist.)}: same as \textit{Avg.Dist.}, excluding the first $20\unit{s}$ in the simulation execution. %\newline
% \textit{Signed Distance Variance (S.Dist.Var.)}: the variance of the signed distance to the avoid region surface (negative value indicates the robot is inside the avoid region). We exclude the initial phase when the robot is distant from the obstacle. \newline
% \item \textit{Signed settleDistance Variance (S.sDist.Var.)}: the variance of the signed distance to the forbidden region surface (negative value indicates the robot is inside the forbidden region), excluding the first $20\unit{s}$ in execution.
\item \textit{CP Coverage (CP Cover.)}: the fractional time for the true $d(\mathbf{x})$ is within its $(1-\alpha)$ confidence interval.
%\textbf{CP Coverage}: The ground truth residual is considered as covered when it is within the prediction interval. We report the average coverage over time.
\end{itemize}

In the Table~\ref{table:single_quantitative} and Table~\ref{table:multiple_quantitative}, we record the averaged quantitative results from $10$ simulation trials. ``RCBFs No Res." stands for the experiments using the Robust HOCBFs controller without simulated dynamics perturbation.

Among all the metrics, a lower but positive $h_\mathbf{min}$ indicates successful adaptation without collision, while a negative $h_\mathbf{min}$ stands for collision. A zero $h_{\mathrm{neg}}$ stands for no collision, and a higher $h_{\mathrm{neg}}$ indicates more collisions happen during execution. A lower \textit{Avg.Dist.} without collision indicates a better adaptation and a less conservative controller; same as \textit{Avg.sDist.}
Our Neural Barrier controller performs consistently across both single- and multi-obstacle instances with the same hyperparameter settings. 
The Robust HOCBFs controller guarantees safety by using worst-case CBF constraints derived from an online estimate of the residual bound based on observable data. This safety guarantee, however, comes with a cost of the controller being conservative.
The HO-RaCBF controller, particularly tuned for each instance, can achieve adaptation in ``Attractive" and ``Repulsive" experiments but fails to adapt and violates safety in the ``Time-varying" experiments, regardless of tuning. 

In the ``Attractive" and ``Repulsive" experiments, the Robust HOCBFs controller leads to the largest \textit{Avg.Dist.} and \textit{Avg.sDist.} among all controllers, for both instances. HO-RaCBF control reduces the controller conservativeness by learning the residual at each control step. Its safety, however, heavily depends on the gain tuning and knowledge matrix $Y(\mathbf{x})$. Note that the HO-RaCBFs controller, tuned and kept hyperparameters the same for each instance, preserves safety while reducing the control conservativeness in the ``Attractive" and ``Repulsive" experiments for the single-obstacle instance, and ``Repulsive" experiments for the multi-obstacle instance.
%In the multiple-obstacle experiments, the tuned HO-RaCBFs controller violates safety in ``" and ``"
%However, even a tuned HO-RaCBF controller still results in safety violations in the ``Time-varying" experiments for the single-obstacle instance and ``Attractive" and ``Time-varying" experiments for the multi-obstacle instance. 
It is worth mentioning that the performance of the HO-RaCBF controller degrades without the tuned hyperparameters, leading to significant safety violations in the single-obstacle instance, and fails to converge in the multi-obstacle ``Repulsive" experiments. Both our online and offline Neural Barrier controllers using estimated CP intervals successfully preserve safety and adapt residual to reduce controller conservativeness across ``Attractive", ``Repulsive", single- and multi-obstacle instances. The CP coverage achieves near and above the pre-defined $60\%$ confidence interval. 
A $60\%$ confidence interval is chosen over a wider confidence interval, as a wider confidence interval leads to overly conservative control at the beginning. A $60\%$ confidence interval also sufficiently covers the true residual to significantly reduce the safety violations compared to an HO-RaCBF controller, indicated by $h_{\mathrm{neg}}$. When applied the CP interval derived in Thm.~\ref{theo:CI_state_to_residual}, our controller yields wider CP interval coverage than designed. 
%Our online and offline controllers, with CP intervals derived from observations, yield wider confidence intervals than desired and more conservative controls. Their conservativeness depends on the Lipschitz continuity of the underlying residuals, neural networks, trajectory, and predicted trajectory. The extra conservativeness is traded in for safety guarantees without estimation. 
The ``Time-varying" experiments demonstrate the clearest adaptation with safety of our controller. Both our online and offline Neural Barrier controllers exhibit no observed safety violations in these experiments while maintaining small obstacle distances, indicated by $h_{\mathrm{min}}$, $h_{\mathrm{neg}}$, \textit{Avg.Dist.} While the robust HOCBFs controller is overly conservative, and the HO-RaCBFs controller violates safety and fails to adapt. 

To further illustrate the conservativeness of our controllers, the time series plot of the estimated $(1-\alpha)$ quantile $Q_{1-\alpha}$ in~\eqref{eq:CP-dynamics} is depicted in Fig.~\ref{fig:quantile_plot}. We observe that the confidence interval reduces as the controller sees more data across different simulated residual types and instances, reflecting an improved adaptation over time.
%where only our controllers preserve the safety while adapting the simulated residual dynamics, indicated by $h_{\mathrm{min}}$, $h_{\mathrm{neg}}$, \textit{Avg.Dist.} Across all the experiments, our Neural Barrier controllers require no specific hyperparameter tuning for instance, yet demonstrate their safety preservation and adaptation to time-varying redisual dynamics, making the baseline algorithms secondary.
% From Table~\ref{table:quantitative}, we observe that our NODE-aCBF controllers effectively adapt the external residual dynamics. 
% In both ``Attractive" and ``Repulsive" experiments, the average distance settles significantly closer to the avoid region surface, i.e., closer to $0$, compared with the HOCBF baseline. In ``Time-varying" experiments, as the residual changes between attractive and repulsive external dynamics, the baseline controller switches between safety violation and overly conservativeness, resulting in a similar average distance to our NODE-aCBF controllers. Both robust baseline and all our NODE-RaCBFs controllers achieve safety across all experiments. Note that our NODE-RaCBF approaches, while guaranteeing safety, effectively reduce the average distance to the avoid region surface, demonstrating less conservative controls. 
\begin{figure}[t]
\vspace{-1em}
    \centering
    \includegraphics[width=0.45\textwidth]{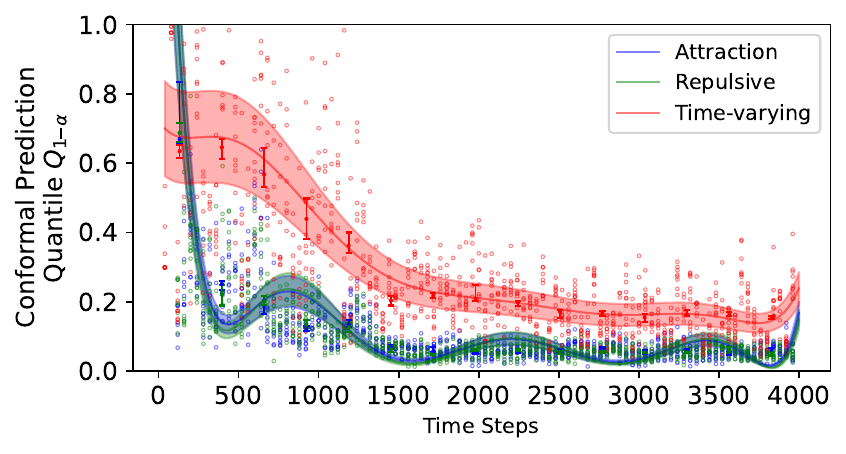}\\
    \includegraphics[width=0.45\textwidth]{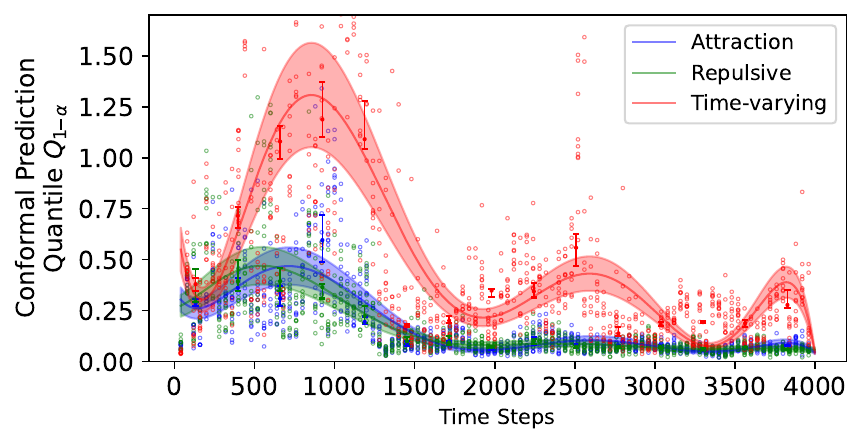}
    \caption{$(1\!-\!\alpha)$ quantile of nonconformity scores sampled in online Neural Barrier controller. For each trial, a polynomial of degree $8$ is fitted to all quantile samples, the mean and $95\%$ confidence interval is plotted. The dot on the error bars represents the median, and the ends depict the $25^{\mathrm{th}}$ and $75^{\mathrm{th}}$ percentiles of the quantiles in a $1.33\unit{s}$ time window. The statistics are averaged across 10 trials.}
    \label{fig:quantile_plot}
    % \vspace{-1em}
\end{figure}

\subsection{Coverage of Conformal Prediction}
As we mentioned in Sec.~\ref{sec:preliminary_WCP}, to provide a valid confidence interval for the true model, the conformal prediction requires training data and calibration data to be exchangeable, which is a strong assumption for the time series we study. Instead, we employ WCP to alleviate such limitations. To validate the efficacy of WCP, we examine its coverage of the true model in its confidence interval. From Table~\ref{table:single_quantitative} and Table~\ref{table:multiple_quantitative} \textit{CP Coverage} metric, we note that the WCP reaches near or above the designed confidence level, i.e., $60\%$. In Fig.~\ref{fig:cp_cover_time_series}, we show the time series of the \textit{CP Coverage} in the single- and multi-obstacle instances. Despite the lack of coverage at the beginning of the experiment, the \textit{CP Coverage} converges to the designed level once a moderate size of data, after $1000$ samples or equivalently $10\unit{s}$ of data, is collected. In practice, we manage this lack of coverage by training the first model with sufficient data. 
%quickly reaches the desired confidence level once the sample size becomes statistically significant. 
These results verify the efficacy of our CP module under the time series we study and the safety guarantee we present for the Neural Barrier.
\begin{figure}[t]
    \centering
    \subfloat[CP Coverage time series in single-obstacle experiments]    
    {\includegraphics[width=0.49\textwidth]{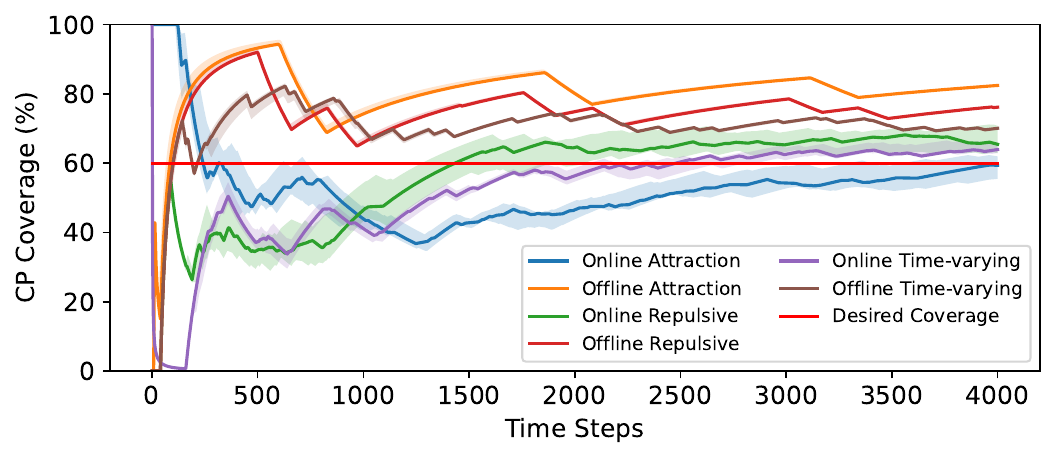}}\\
    \subfloat[CP Coverage time series in multi-obstacle experiments]
    {\includegraphics[width=0.49\textwidth]{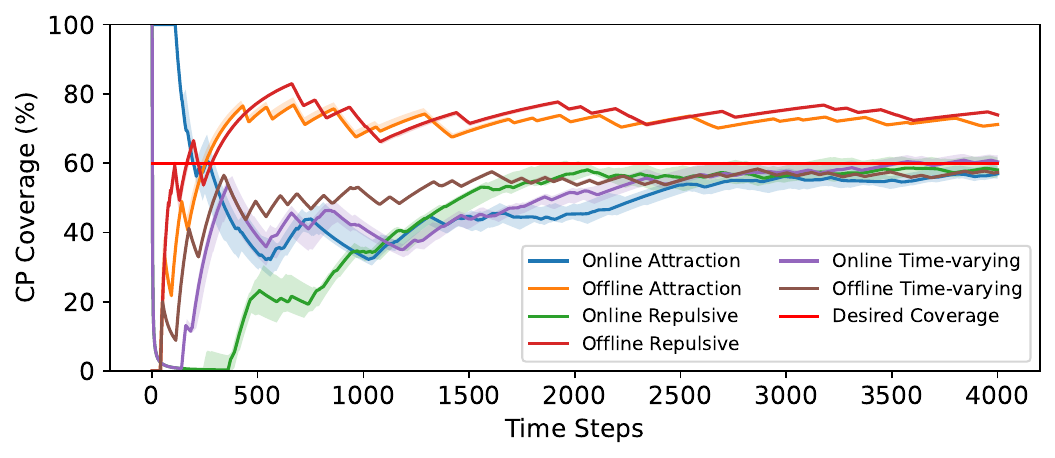}}
    \caption{The time series of CP Coverage for our onlione and offline controller in both single- and multi-obstacle instances under different residual types. The statistics are averaged across 10 trials.}
    \label{fig:cp_cover_time_series}
    % \vspace{-1em}
\end{figure}
\begin{figure}[t]
    \centering
    \hspace{0.6cm}\includegraphics[width=0.4\textwidth]{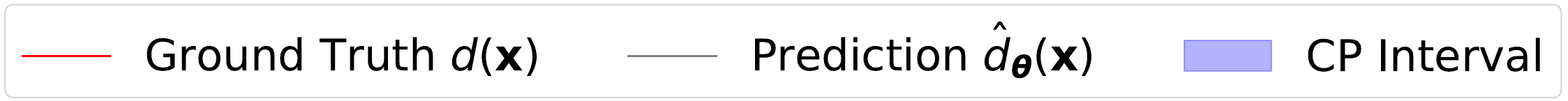}\\
    \includegraphics[width=0.49\textwidth]{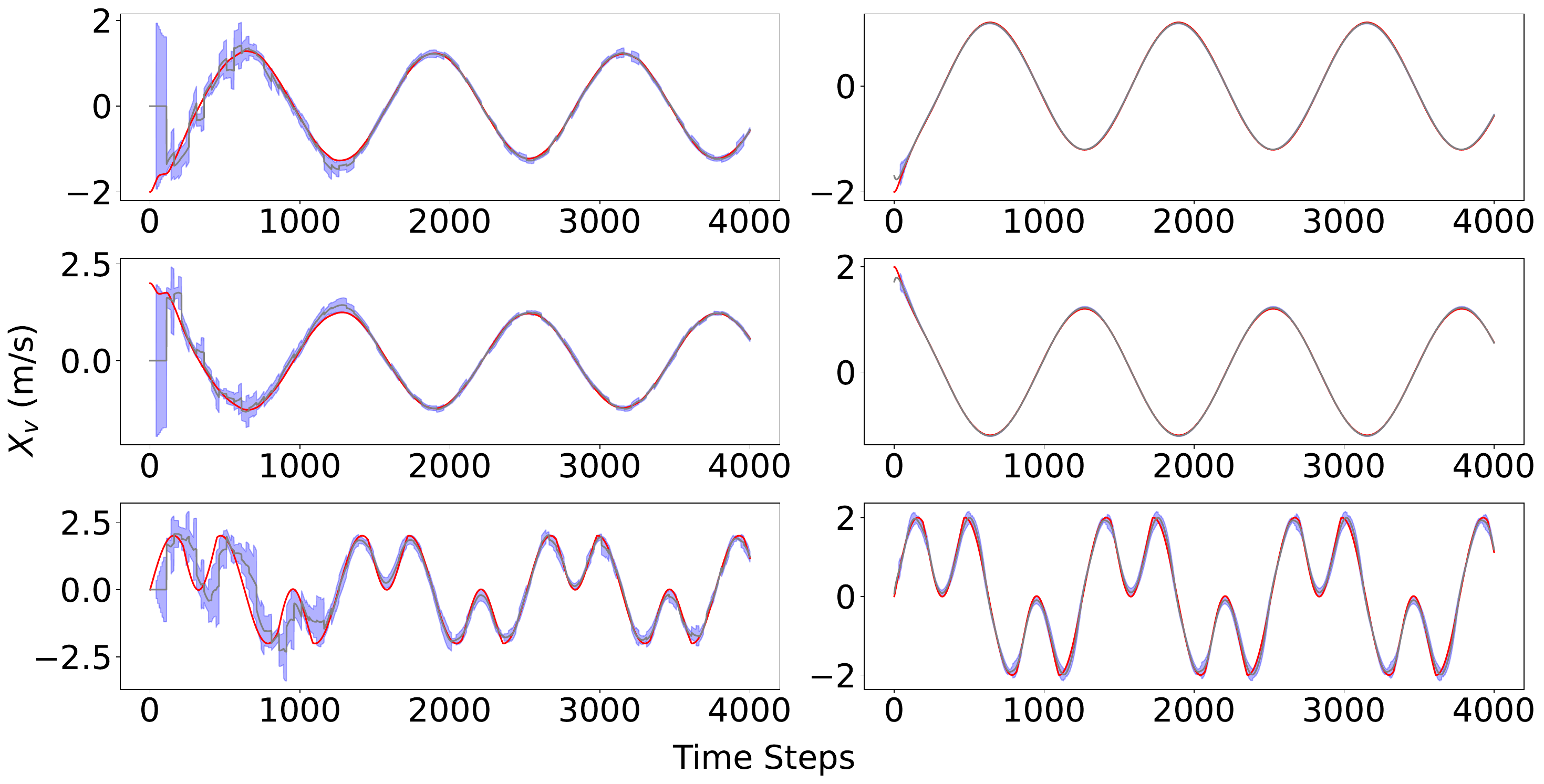}\\
    \includegraphics[width=0.49\textwidth]{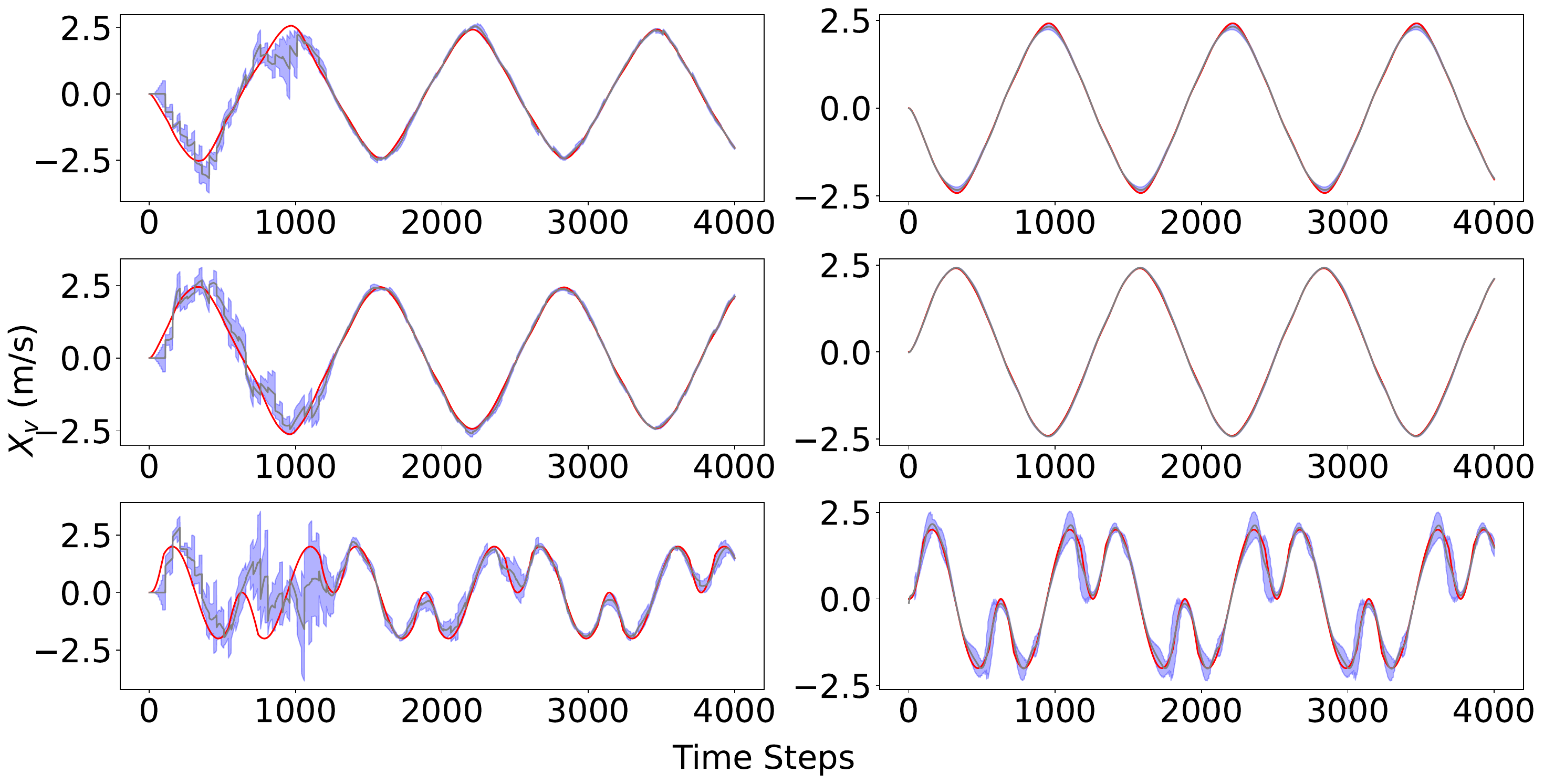}
    \caption{The ground truth residual dynamics of the $x$-velocity dimension (red), the neural network prediction (gray), and the CP predictive interval (blue) plots. On the left column, the plots are online residual learning results in single-obstacle ``Attractive", ``Repulsive", ``Time-varying", multi-obstacle ``Attractive", ``Repulsive", and ``Time-varying", respectively. On the right column, the plots are the offline residual learning results. Only one trial is depicted to demonstrate the exact coverage of our Neural Barrier controller.}
    \label{fig:prediction_CI}
    % \vspace{-1em}
\end{figure}

In Fig.~\ref{fig:prediction_CI}, we depict the time series of residual prediction (gray) and its predictive interval (blue) against the ground truth residual model (red) for the $x$-velocity. In the left column, the results show the online residual learning performance in the single-obstacle ``Attractive", ``Repulsive", ``Time-varying", multi-obstacle ``Attractive", ``Repulsive", ``Time-varying" experiments, respectively; in the right column, the results show our offline residual learning performance. The results support our conclusion that our online controller has a less accurate model and compensates for it with a wider predictive interval to cover the true residual model at early execution, and improves its prediction accuracy and narrows the predictive interval as it sees more data, leading to a less conservative controller while preserving safety. Our offline model has accurate prediction and maintains a narrow predictive interval during the entire execution.

\section{Physical Experiments}
\label{sec:physical_experiments}
\begin{figure*}[t]
    \centering
    \subfloat[RCBFs - No Wind]{\includegraphics[width=0.24\textwidth]{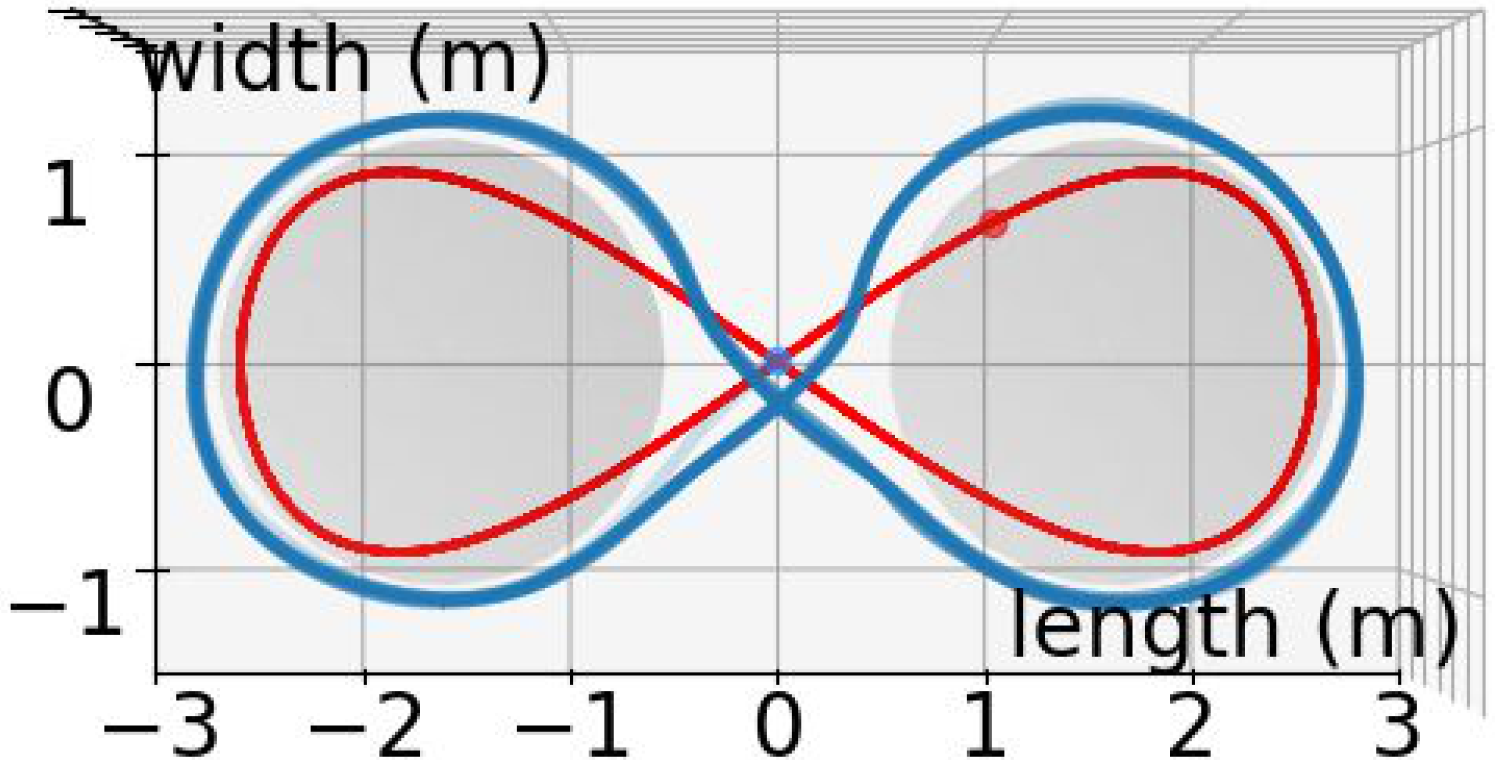}}
    \subfloat[RaCBFs State $Y$ \label{fig:physical_RaCBFs_state}]{\includegraphics[width=0.24\textwidth]{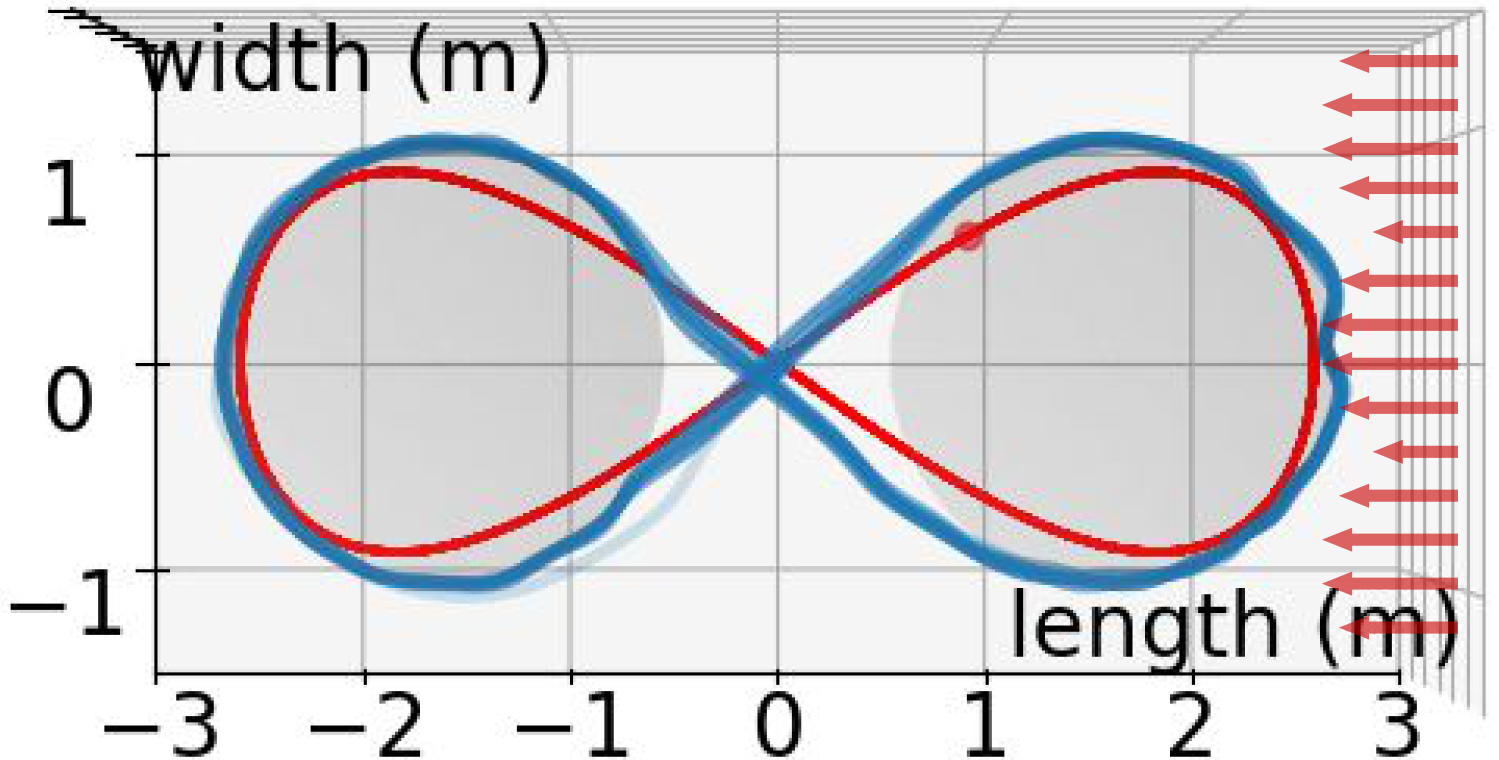}}
    \subfloat[Ours Online 500 \label{fig:physical_online_500}]{\includegraphics[width=0.24\textwidth]{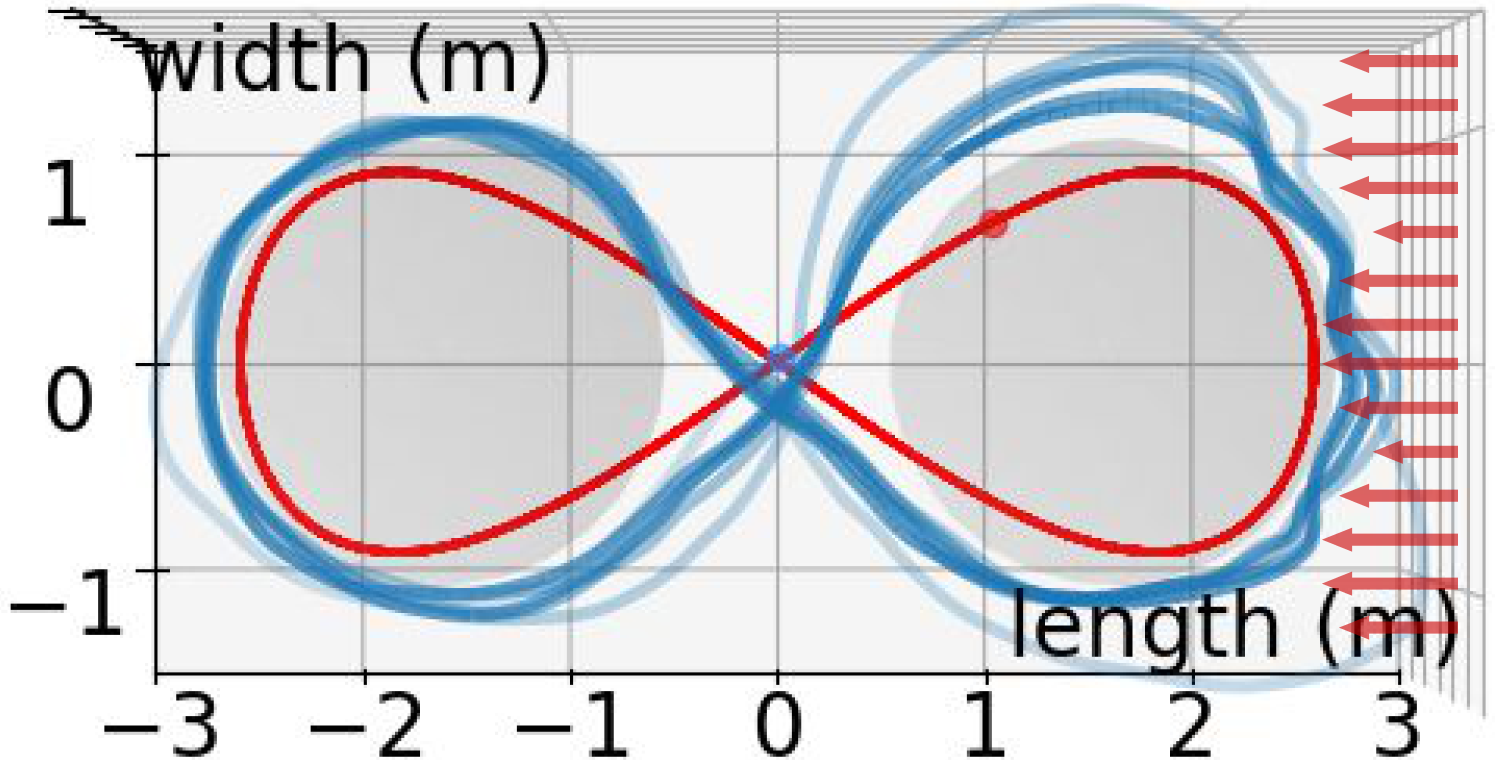}}
    \subfloat[Ours Offline Stage 1 \label{fig:physical_offline_stage1}]{\includegraphics[width=0.24\textwidth]{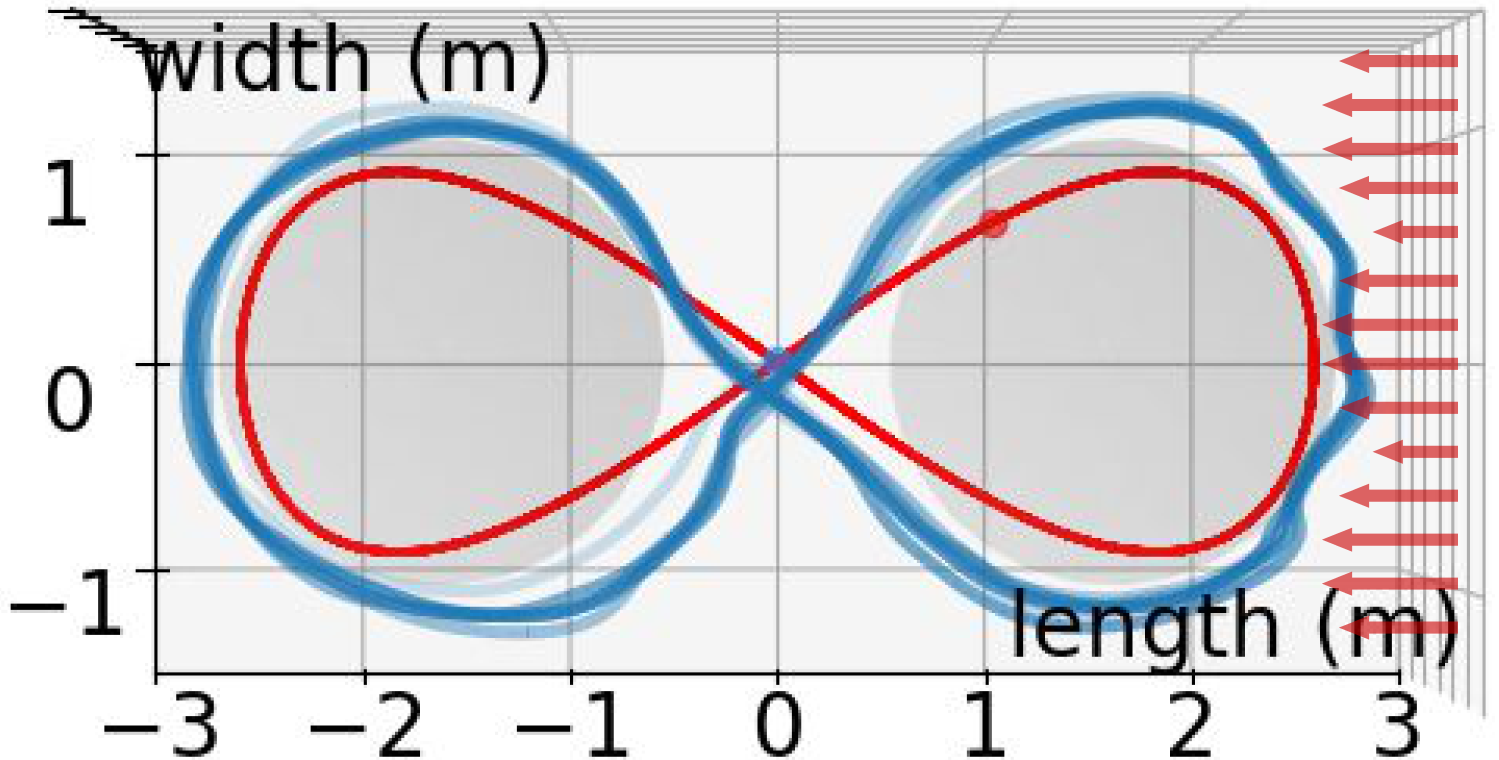}}\\
    \subfloat[RCBFs\label{fig:physical_RCBFs}]{\includegraphics[width=0.24\textwidth]{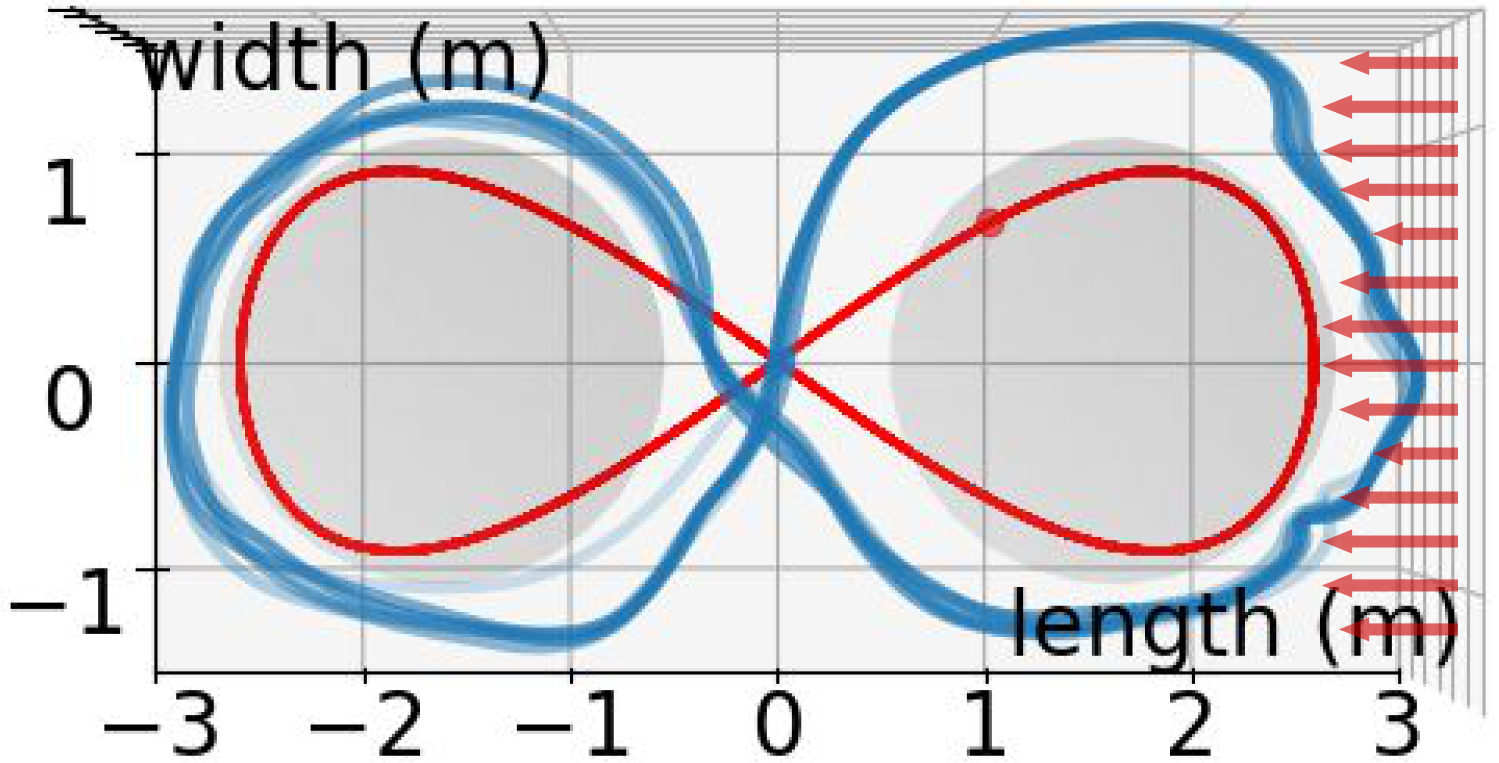}}
    \subfloat[RaCBFs Const. $Y$ \label{fig:physical_RaCBFs_const}]{\includegraphics[width=0.24\textwidth]{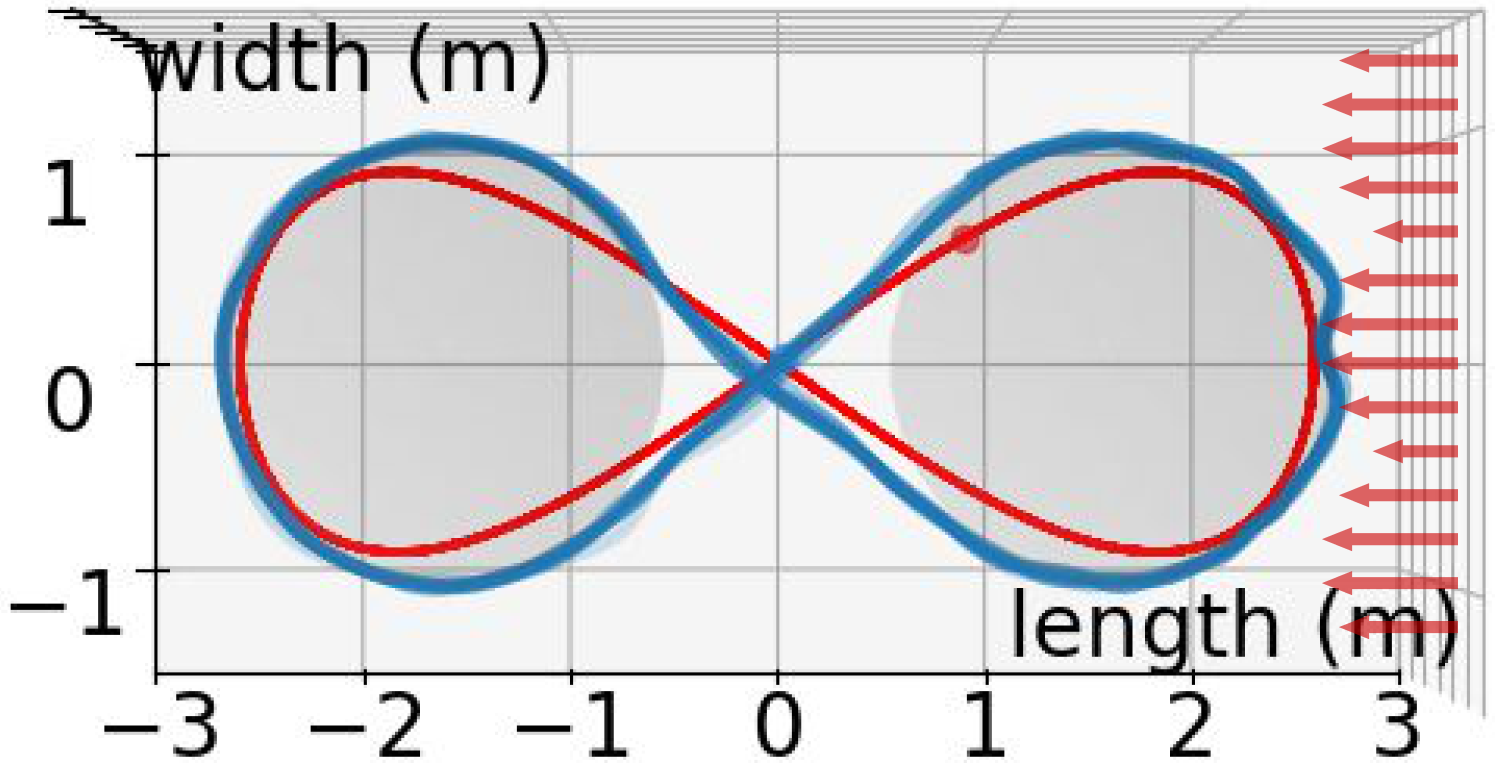}}
    \subfloat[Ours Online 2400 \label{fig:physical_online_2400}]{\includegraphics[width=0.24\textwidth]{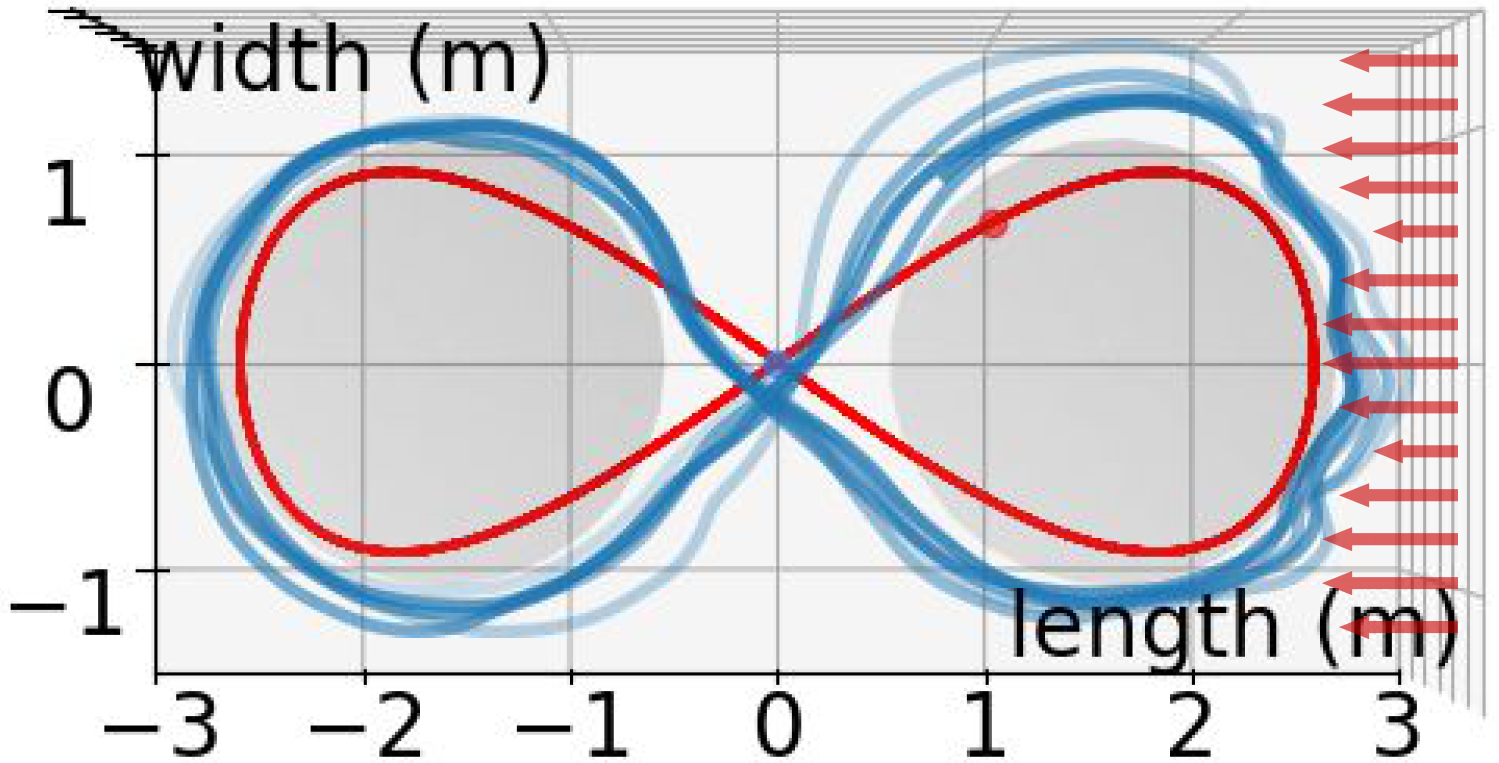}}
    \subfloat[Ours Offline Stage 2 \label{fig:physical_offline_stage2}]{\includegraphics[width=0.24\textwidth]{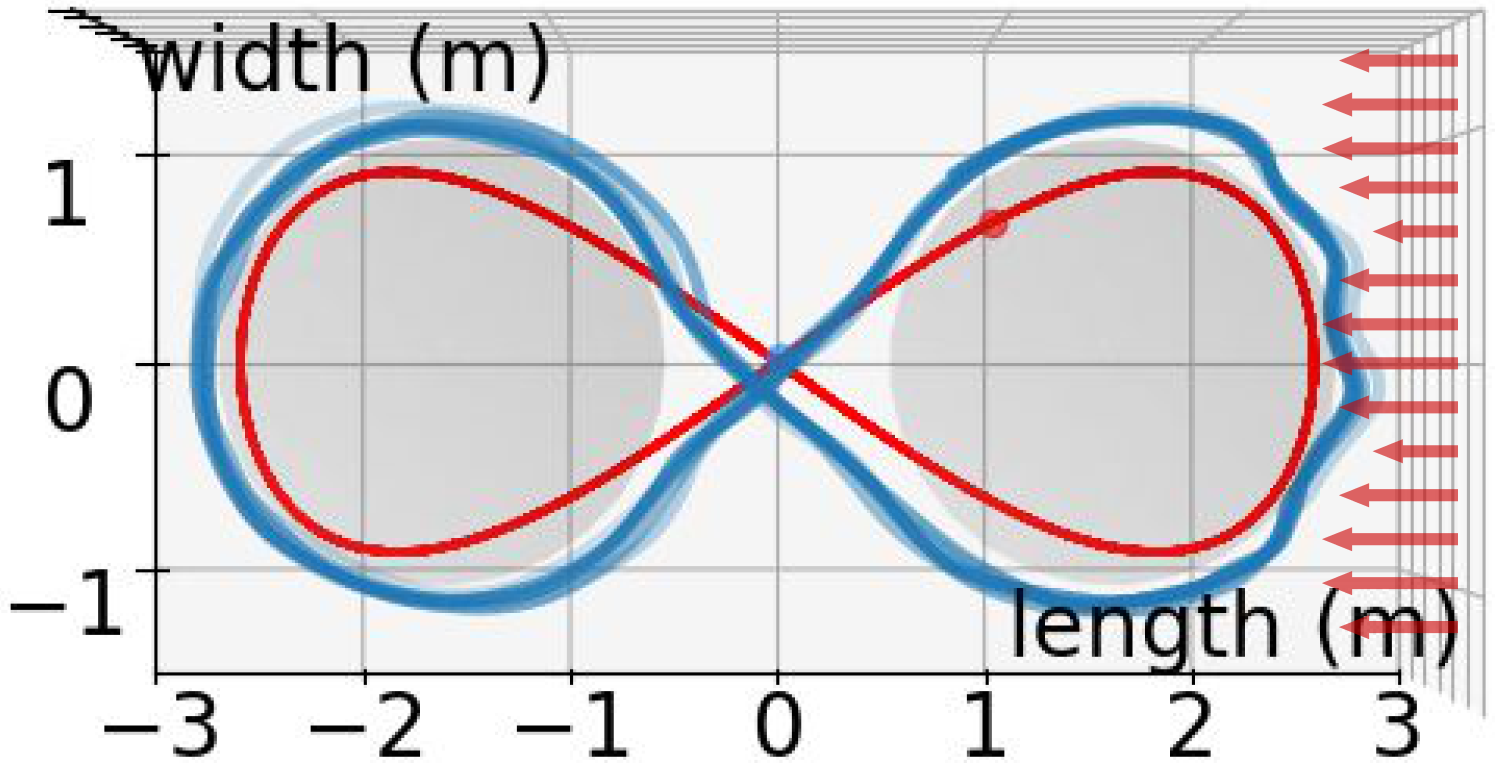}}
    \caption{The top views of $150\unit{s}$ trajectories using different safety controllers in the physical experiments. The red arrows indicate the wind's location and direction (the length of each arrow represents its strength).}
    \label{fig:physical_qualitative}
    % \vspace{-1em}
\end{figure*}
\subsection{Physical Experiment Setup}
The Crazyflie 2.1 is used as the physical platform to verify our controller. The nano quadrotor weighs $38\unit{g}$ with a $300\unit{mAh}$ battery. The position of the quadrotor is obtained from the \textsc{Vicon} motion capture system at $120\unit{Hz}$. The velocity is estimated using a Kalman filter. The controller and reference generator run at a frequency of $120\unit{Hz}$. We use Crazyswarm~\cite{preiss2017crazyswarm} to send position, velocity, and acceleration commands to the quadrotor at $100\unit{Hz}$. A laptop with Intel Core Ultra 9 275HX is used as the base station, and communication with the Crazyflie is established using Crazyradio PA. Three box fans, each with dimensions 0.52\unit{m}$\times$0.54\unit{m}, are placed in a row, with their centers at height of $0.7\unit{m}$, at $2.72\unit{m}$ in the positive direction of the x-axis from the origin. These fans produce wind disturbance that travels at $7.62\unit{m/s}$. (Their locations and wind direction are depicted in Fig.~\ref{fig:physical_qualitative}).
% and $1.16\unit{m}$ in the positive direction of the x-axis from the origin. Its horizontal airflow produces an effective turbulent region of approximately $0.55\unit{m}$ wide, 
%, with an effective $0.5\unit{m}$ horizontal coverage,
%a large area of turbulence, 
%which travels $18\unit{km/h}$ (its location and direction are depicted in Fig.~\ref{fig:physical_qualitative}). 
We define a planar figure-eight reference trajectory as
\begin{equation}
    \mathbf{r}_{\mathrm{ref}}(t) =
    \begin{bmatrix}
        2.4 \sin(\omega t) \\
        1.7 \sin(\omega t)\cos(\omega t) \\
        0.7
    \end{bmatrix}
    \mathrm{m},
    \quad
    \omega = -0.25~\mathrm{rad/s}.
    \label{eq:physical_figure8_reference}
\end{equation}
%height $0.7\unit{m}$ with constant angular velocity $0.25\unit{rad/s}$. 
The trajectory has a period of approximately $25.13\unit{s}$. 
Two obstacles are centered at $[-1.5\unit{m}, 0\unit{m}, 0.7\unit{m}]$ and $[1.5\unit{m}, 0\unit{m}, 0.7\unit{m}]$. The safety controller enforces a minimum horizontal distance of $1\unit{m}$ from each obstacle center.  
%The quadrotor must maintain a $1\unit{m}$ safety distance from the center of each obstacle. 
%A laptop with Intel i7 is used as the base station, and communication with the Crazyflie is established using Crazyradio PA. We obtain $100\unit{Hz}$ position data from \textsc{Vicon} motion capture, and estimate velocity from filtering. Control inputs are computed and sent to the quadrotor at $100\unit{Hz}$. 
Conformal prediction bound is computed in~\eqref{eq:CP-dynamics}. The finite difference in~\eqref{eq:finite_difference_dx} and the neural network prediction of the residual dynamics are noisy due to the noisy estimation of the robot state. To smooth the residual signals, we use the Kalman filter for both $\tilde{d}$ and $\hat{d}$. 

To capture the wind disturbance, we increase the number of neurons in each layer to $64$ in both our online and offline controllers. 
For the online Neural Barrier controller, we maintain a FIFO training buffer containing up to $7200$ samples, corresponding to approximately $60\unit{s}$ of online data at a sampling frequency of $120\unit{Hz}$. To enhance control stability, we allow the first neural network to begin training only after sufficient data has been collected. ``Online $500$" trains the model with at least $500$ samples, approximately $4.2\unit{s}$ of online data and ``Online $2400$" trains with at least $2400$ samples. Neural adaptation is enabled after the first updated model becomes available. In offline mode, we pre-train the neural network on the collected trajectory. We report the two stages of pre-training: in stage one, the residual model is trained on a $150\unit{s}$ trajectory, collected using the Robust HOCBF controller; in stage two, the residual model is fine-tuned with another $150\unit{s}$ trajectory, collected using the Neural Barrier controller, whose residual model is trained in stage one. 
For Conformal Prediction, we select the confidence interval $(1-\alpha)=0.9$ to prevent collision. 
The Conformal Prediction bound is computed in~\eqref{eq:CP-dynamics}. 
We maintain a FIFO calibration buffer containing up to $120$ samples, corresponding to approximately $1\unit{s}$ of online data, and update at $10\unit{Hz}$ to provide an instant safety response. For the HO-RaCBF controller, we use the state-informed matrix $Y(\mathbf{x})=[\mathbf{0}_{3\times 3};r_{1},0,0;0,r_{2},0;0,0,r_{3}]$, denoted as ``HO-RaCBF State $Y(\mathbf{x})$", and a constant matrix $Y(\mathbf{x})=[\mathbf{0}_{3\times 3};\mathbf{I}_{3\times 3}]$, denoted as ``HO-RaCBF Constant $Y(\mathbf{x})$". 

% A representative physical experiment using our ``Online $2400$" controller is shown in Fig.~\ref{fig:} and Fig.~\ref{}. Note that the controller conservativeness is largely reduced after the first experience of the wind disturbance and the safety is certified with at least the designed confidence level. 

% We increase the number of neurons in each layer to $64$ in both our online and offline controller to estimate the wind turbulence. \newtext{For the online mode, we maintain an FIFO queue of $10\unit{s}$ of online data for training; there is no pre-training.} For the offline mode, we trained on $350\unit{s}$ state-control data collected by executing the HOCBF controller.  A representative physical experiment using our online controller is shown in Fig.~\ref{fig:physical_longexp_angle2}. Note that safety is improved against the wind on the fly. For the HO-aCBF controller, we used the state-informed matrix $Y(\mathbf{x})=[\mathbf{0}_{3\times 3};r_{1},0,0;0,r_{2},0;0,0,r_{3}]$, namely HO-aCBF State $Y(\mathbf{x})$, and a constant matrix $Y(\mathbf{x})=[\mathbf{0}_{3\times 3};\mathbf{I}_{3\times 3}]$, namely HO-aCBF Constant $Y(\mathbf{x})$.

\subsection{Physical Experiment Results}
In this section, we present qualitative and quantitative results for our Neural Barrier controller from the physical experiments. 

As qualitative results, we depict the representative trajectory for each controller in the physical experiments in Fig.~\ref{fig:physical_qualitative}. Note that the HO-RCBFs controller (Fig.~\ref{fig:physical_RCBFs}) results in an overly conservative trajectory. While the HO-RaCBFs controller (Fig.~\ref{fig:physical_RaCBFs_state},~\ref{fig:physical_RaCBFs_const}) adapts to uncertain dynamics, it fails to capture a high-fidelity residual model, causing safety violations when the dynamics change drastically. For our online controllers (Fig.~\ref{fig:physical_online_500},~\ref{fig:physical_online_2400}), as the amount of observed data increases, they quickly adapt to wind turbulence, i.e., reduce conservativeness while maintaining the safety guarantee. In our online learning setting, a warm start significantly improves the controller stability at the beginning of the experiment, as shown by the initial trajectory of ``Online 2400" being more stable compared to ``Online 500". Our offline controller (Fig.~\ref{fig:physical_offline_stage1},~\ref{fig:physical_offline_stage2}) adapts to wind turbulence out of the box through a pretraining stage, and safety is preserved throughout the experiment.
% \begin{figure*}[t]
%     \centering
%     \subfloat[RCBFs - No Wind]{\includegraphics[width=0.24\textwidth]{images/Physical/NeuralBarrier_top1.png}}
%     \subfloat[RaCBFs State $Y$ \label{fig:physical_RaCBFs_state}]{\includegraphics[width=0.24\textwidth]{images/Physical/NeuralBarrier_top2.png}}
%     \subfloat[Ours Online 500 \label{fig:physical_online_500}]{\includegraphics[width=0.24\textwidth]{images/Physical/NeuralBarrier_top3.png}}
%     \subfloat[Ours Offline Stage 1 \label{fig:physical_offline_stage1}]{\includegraphics[width=0.24\textwidth]{images/Physical/NeuralBarrier_top4.png}}\\
%     \subfloat[RCBFs\label{fig:physical_RCBFs}]{\includegraphics[width=0.24\textwidth]{images/Physical/NeuralBarrier_bottom1.png}}
%     \subfloat[RaCBFs Const. $Y$ \label{fig:physical_RaCBFs_const}]{\includegraphics[width=0.24\textwidth]{images/Physical/NeuralBarrier_bottom2.png}}
%     \subfloat[Ours Online 2400 \label{fig:physical_online_2400}]{\includegraphics[width=0.24\textwidth]{images/Physical/NeuralBarrier_bottom3.png}}
%     \subfloat[Ours Offline Stage 2 \label{fig:physical_offline_stage2}]{\includegraphics[width=0.24\textwidth]{images/Physical/NeuralBarrier_bottom4.png}}
%     \caption{The top views of $150\unit{s}$ trajectories using different safety controllers in the physical experiments. The red arrows indicate the wind's location and direction (the length of each arrow represents its strength).}
%     \label{fig:physical_qualitative}
%     \vspace{-1em}
% \end{figure*}

For the quantitative results, we first present benchmarks against the HO-RCBFs and HO-RaCBFs with our online and offline Neural Barrier controller, as shown in Table~\ref{table:physical_quantitative}. 
\begin{table}[h]
\centering
\scalebox{0.9}{
\begin{tabular}{|c|c|c|c|c|}
% \hline
% \multicolumn{1}{|c||}{Residual Type}  & \multicolumn{4}{c||}{ Attractive } & \multicolumn{4}{c||}{ Repulsive } & \multicolumn{4}{c|}{ Time-varying }\\ %& \multicolumn{1}{c||}{ Distance to Obstacles }  & \multicolumn{1}{c|}{ CP Coverage } \\
\hline
\multicolumn{1}{|c|}{Method} & \multicolumn{1}{c|}{$h_{\mathrm{min}}$} & \multicolumn{1}{c|}{$h_{\mathrm{neg}}$} & \multicolumn{1}{c|}{Avg.Dist.} & \multicolumn{1}{c|}{Avg.sDist.}  \\
\hline
RCBFs - No Wind
    & 0.07 & 0.00\% & 0.13 & 0.13 \\
\hline
RCBFs
    & 0.02 & 0.06\% & 0.27 & 0.28 \\
\hline
RaCBFs State $Y$
    & -0.11 & 53.14\% & 0.06 & 0.06 \\
\hline
RaCBFs Const. $Y$
    & -0.14 & 53.01\% & 0.06 & 0.06 \\
\hline
Ours Online 500
    & -0.23 & 1.43\% & 0.18 & 0.17 \\
\hline
Ours Online 2400
    & -0.01 & 0.38\% & 0.16 & 0.16 \\
\hline
Ours Offline Stage 1
    & -0.03 & 0.41\% & 0.14 & 0.14 \\
\hline
Ours Offline Stage 2
    & -0.05 & 1.38\% & 0.12 & 0.12 \\
\hline
\end{tabular}
}
\caption{Quantitative comparison between the baseline controllers and our controllers (The ``HO-'' prefix is dropped in all methods due to space limit) in the physical experiments. The statistics are averaged across $5$ trials.}
\label{table:physical_quantitative}
% \vspace{-1em}
\end{table}

Compared to the HO-RCBFs controller, our Neural Barrier controllers significantly reduce the distance to the obstacles, according to \textit{Avg.Dist.} and \textit{Avg.sDist.}. Compared to the HO-RaCBFs controller, our Neural Barrier controllers significantly reduce $h_{\mathrm{neg}}$ and bring $h_{\mathrm{min}}$ close to $0$, thus improving the safety significantly.

To demonstrate the efficacy of Neural Barrier's online adaptation during the physical experiment, $5$ trials of ``Online 2400" results are used as exemplars, for which we show the time series of measured residual, Neural Barrier predicted residual, the absolute value of prediction error between the measured and predicted dynamics residual for the velocity dimension. In addition, we plot the corresponding conformal prediction $(1-\alpha)$ quantile, as shown in Fig.~\ref{fig:physical_prediction_quantile}. Note that, with only online learning, the predicted residual closely approximates the measured residual, starting at $35\unit{s}$.
\begin{figure}[h]
    \centering
    \includegraphics[width=0.49\textwidth]{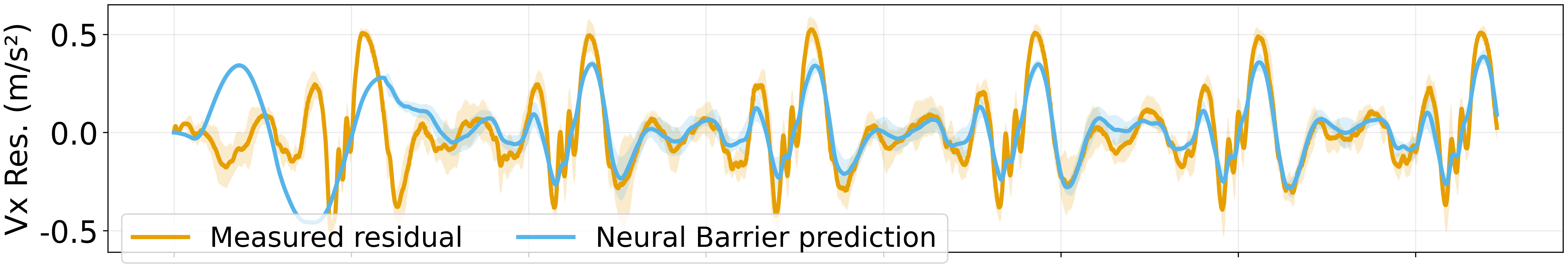}\\
    \includegraphics[width=0.49\textwidth]{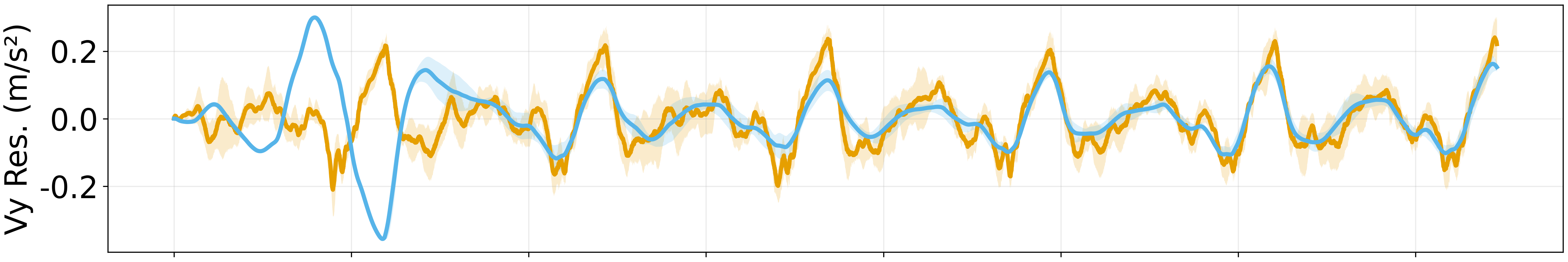}\\
    \includegraphics[width=0.49\textwidth]{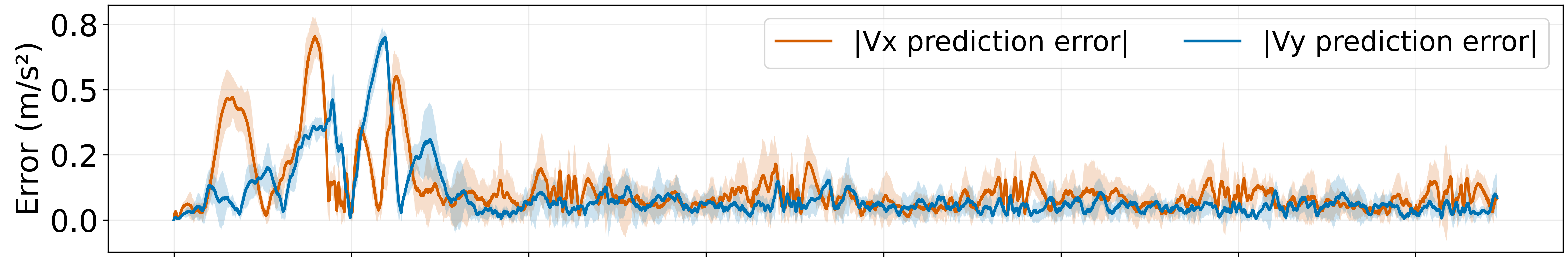}\\
    \includegraphics[width=0.49\textwidth]{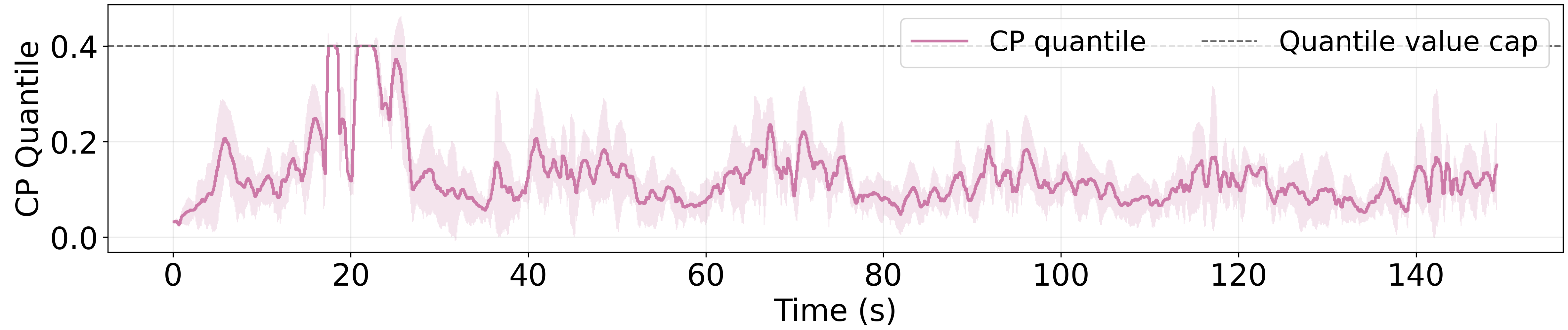}\\
    \caption{Time series of measured and predicted dynamics residual in the velocity dimension, prediction error, and $(1-\alpha)$ quantile used for WCP in the ``Online 2400" physical experiments. The $95\%$ confidence intervals are over $5$ trials.}
    \label{fig:physical_prediction_quantile}
    % \vspace{-1em}
\end{figure}

\section{Conclusion}
In this paper, we proposed a learning-enhanced robust adaptive control barrier function that provides a safety guarantee with probabilistic bounds in the presence of uncertain dynamics. The guarantee holds provided that the underlying dynamics model and trajectories satisfy the assumed Lipschitz bounds. We further extended the formulation to high-order control barrier functions to accommodate a broader class of systems. Future work will explore certifiable generative motion planning, where our goal is to integrate the adaptive safety layer in a way to combine the efficiency of learning-based planners with safety guarantees, while avoiding overly conservative motions.

% \section*{Appendix A}
% \subsection*{Proof of Theorem 2'}

% \addtolength{\textheight}{-12cm}   % This command serves to balance the column lengths
                                  % on the last page of the document manually. It shortens
                                  % the textheight of the last page by a suitable amount.
                                  % This command does not take effect until the next page
                                  % so it should come on the page before the last. Make
                                  % sure that you do not shorten the textheight too much.

%%%%%%%%%%%%%%%%%%%%%%%%%%%%%%%%%%%%%%%%%%%%%%%%%%%%%%%%%%%%%%%%%%%%%%%%%%%%%%%%

\bibliographystyle{IEEEtran}
\bibliography{IEEEabrv, refs}

\end{document}